%

\documentclass[ejs]{imsart}


\usepackage{graphicx} 
\usepackage[utf8]{inputenc} 
\usepackage[T1]{fontenc}    
\usepackage{url}            
\usepackage{booktabs}       
\usepackage{amsfonts}       
\usepackage{nicefrac}       
\usepackage{microtype}      

\usepackage{amsmath, amsfonts, amssymb, amsthm, dsfont}
\usepackage{algorithm}
\usepackage{algorithmic}
\usepackage{xspace}
\usepackage{natbib}
\usepackage{wrapfig}
\usepackage[hang,flushmargin]{footmisc} 

\doi{10.1214/154957804100000000}
\pubyear{0000}
\volume{0}
\firstpage{0}
\lastpage{0}

\startlocaldefs

\newlength{\minipagewidth}
\newlength{\minipagewidthx}
\setlength{\minipagewidth}{0.5\textwidth}
\setlength{\minipagewidthx}{0.5\textwidth}
\setlength{\fboxsep}{1.5mm}
\addtolength{\minipagewidth}{-\fboxrule}
\addtolength{\minipagewidth}{-\fboxrule}
\addtolength{\minipagewidth}{-\fboxsep}
\addtolength{\minipagewidth}{-\fboxsep}
\addtolength{\minipagewidthx}{+\fboxsep}

\newcommand{\bookboxx}[1]{\small
\par\medskip\noindent
\framebox[0.65\textwidth]{
\begin{minipage}{0.6\dimexpr\textwidth-\parindent\relax} {#1} \end{minipage} } \par\medskip }

\newcommand{\transp}{\mathsf{T}}

\newcommand{\ball}{\mathcal{B}}

\newcommand{\E}{\mathbb E}
\newcommand{\Prob}{\mathbb P}
\newcommand{\distro}{\mathcal{D}^{\ts}}

\newcommand{\rls}{\textsc{RLS}\xspace}
\newcommand{\ts}{\textsc{TS}\xspace}

\newcommand{\opt}{\text{opt}\xspace}
\newcommand{\grad}{\text{grad}\xspace}

\newcommand{\wt}[1]{\widetilde{#1}}
\newcommand{\wh}[1]{\widehat{#1}}

\newcommand{\wb}[1]{\overline{#1}}
\def\:#1{\protect \ifmmode {\mathbf{#1}} \else {\textbf{#1}} \fi}


\newcommand{\D}{\mathcal D}
\newcommand{\F}{\mathcal F}
\newcommand{\G}{\mathcal G}

\newcommand{\X}{\mathcal X}

\newcommand{\calE}{\mathcal E}

\newcommand{\I}{\mathds{1}}

\renewcommand{\Re}{\mathbb{R}}

\newtheorem{lemma}{Lemma}
\newtheorem{assumption}{Assumption}

\newtheorem{proposition}{Proposition}
\newtheorem{definition}{Definition}
\newtheorem{theorem}{Theorem}



\endlocaldefs

\begin{document}

\begin{frontmatter}

\title{Linear Thompson Sampling Revisited}
\runtitle{Linear Thompson Sampling Revisited}


\author{\fnms{Marc} \snm{Abeille}\corref{}}
\and
\author{\fnms{Alessandro} \snm{Lazaric}}
\address{INRIA Lille-Nord Europe, Team SequeL}

\runauthor{Marc Abeille and Alessandro Lazaric}

\begin{abstract}
We derive an alternative proof for the regret of Thompson sampling (\ts) in the stochastic linear bandit setting. While we obtain a regret bound of order $\wt{O}(d^{3/2}\sqrt{T})$ as in previous results, the proof sheds new light on the functioning of the \ts. We leverage the structure of the problem to show how the regret is related to the sensitivity (i.e., the gradient) of the objective function and how selecting optimal arms associated to \textit{optimistic} parameters does control it. Thus we show that \ts can be seen as a generic randomized algorithm where the sampling distribution is designed to have a fixed probability of being optimistic, at the cost of an additional $\sqrt{d}$ regret factor compared to a UCB-like approach. Furthermore, we show that our proof can be readily applied to regularized linear optimization and generalized linear model problems.
\end{abstract}


\begin{keyword}
\kwd{Linear Bandit}
\kwd{Thompson Sampling}
\end{keyword}

\end{frontmatter}


\section{Introduction}\label{sec:intro}


The multi-armed bandit (MAB) framework~\citep{bubeck2012regret} formalizes the exploration-exploitation trade-off in sequential decision-making, where a learner 
needs to balance between exploiting current estimates to select actions maximizing the reward and exploring actions to improve the accuracy of its estimates. Two popular approaches have been developed to trade off exploration and exploitation: the \textit{optimism in face of uncertainty} (OFU) principle (see e.g.,~\citet{agrawal1995sample,auer2002finite}), which consists in choosing the optimal action according to upper-confidence bounds on the true values, and the Thompson Sampling (\ts) strategy, which randomizes actions on the basis of their uncertainty. In this paper we mostly focus on this second approach.

TS is an heuristic for decision-making problems characterized by some unknown parameters.
The first version of this Bayesian heuristic dates back to~\citet{thompson1933likelihood}, but it has been rediscovered several times and successfully applied to address the exploration-exploitation trade-off in a wide range of problems (see e.g., \citealt{strens2000bayesian},~\citealt{chapelle2011an-empirical},~\citealt{russo2014learning}). The basic idea is to assume a \textit{prior} distribution over the unknown parameters and to use the Bayes rule to update it using the samples obtained over time. More precisely, at each time step the learner gathers information by executing the optimal action corresponding to a random parameter sampled from the current posterior distribution.\\

\textbf{Related literature.} While the Bayesian perspective of \ts provides a convenient tool to derive the sampling distribution, the algorithm is still valid in a frequentist setting, i.e., when the true parameter is not a random variable but a fixed parameter. As a result, the regret of \ts (i.e., the difference between the rewards collected by the algorithm and those of the optimal action) has been analyzed both in the Bayesian and in the frequentist setting.
In MAB, \ts has been shown to achieve optimal performance in the frequentist setting (see e.g., \citealt{may2012optimistic}, \citealt{agrawal2012thompson}, \citealt{kaufmann2012algorithmic}, \citealt{korda2013thompson}) and the dependency of the regret on its prior has been studied in the Bayesian case by~\citet{bubeck2013prior-free}. In more general cases, such as the (generalized) linear bandit and reinforcement learning settings, most of the literature focuses on the analysis of the Bayesian regret (see e.g., \cite{russo2014learning}, \cite{osband2015bootstrapped}, \cite{russo2016an-information-theoretic}). Notable exceptions are the analysis of \ts in finite MDPs by~\citet{gopalan2015thompson} and the study in linear contextual bandit (LB) by~\citet{agrawal2012thompson}.
In this paper, we focus on LB and draw novel insights on the functioning of \ts in this setting. In LB the value of an arm is obtained as the inner product between an arm feature vector $x$ and an unknown global parameter $\theta^\star$. As opposed to the OFU approach, the main technical difficulty in analyzing \ts lies in controlling the deviation in performance due to the randomness of the algorithm. \citet{agrawal2012thompson} follow the MAB proof structure (as in~\cite{agrawal2012analysis}) classifying arms as saturated and unsaturated depending on wether their standard deviation is smaller or bigger than their gap to the optimal arm.\footnote{Here we refer to the definition introduced in the \textit{arXiv} paper, which slightly differs from the original ICML paper.} While for unsaturated arms the regret is related to their standard deviation that decreases over time, they prove that \ts has a small (but constant) probability to select saturated arms and show that this guarantees a regret $\wt O \big(d^{3/2} \sqrt{T} \big)$.\\

\textbf{Contributions.} 
The major contributions of this paper are: 
\textbf{1)} Following the intuition of~\citet{agrawal2012thompson}, we show that the \ts does not need to sample from an actual Bayesian posterior distribution and that any distribution satisfying suitable concentration and anti-concentration properties guarantees a small regret. In particular, we show that the distribution should \textit{over-sample} w.r.t.\ the standard least-squares confidence ellipsoid by a factor $\sqrt{d}$ to guarantee a constant probability of being optimistic.
\textbf{2)} We provide an alternative proof of \ts achieving the same result as~\citet{agrawal2012thompson}. One of our major finding is that, leveraging the properties of support functions from convex geometry, we are able to prove that the regret is related to the gradient of the objective function, that is ultimately controlled by the norm of the optimal arms associated to any optimistic parameter $\theta$. This shows that whenever an optimistic parameter $\theta_t$ is chosen, not only is its instantaneous regret small but the corresponding optimal arm $x_t = \arg\max_x x^\transp\theta_t$ represents a \textit{useful exploration} step that improves the accuracy of the estimation of $\theta^\star$ over dimensions that are relevant to reduce regret in any subsequent non-optimistic step, which is a novel insight into the operation of \ts. This approach allows us to avoid the introduction of saturated/unsaturated arms and it illustrates why any \ts-like algorithm (not necessarily Bayesian) with a constant probability of being optimistic has a bounded regret.
\textbf{3}) Finally, we show how our proof can be easily adapted to regularized linear optimization (with arbitrary penalty) and to the generalized linear model (GLM), for which we derive the first frequentist regret bound for \ts, which was first suggested by~\citet{agrawal2012thompson} as a venue to explore.

\section{Preliminaries}\label{sec:preliminaries}

\textbf{The setting.}
We consider the stochastic linear bandit model. Let $\X \subset \Re^d$ be an arbitrary (finite or infinite) set of arms. When an arm $x\in\X$ is pulled, a reward is generated as $r(x) = x^\transp \theta^\star + \xi$, 
where $\theta^\star\in\Re^d$ is a fixed but unknown parameter and $\xi$ is a zero-mean noise. An arm $x\in\X$ is evaluated according to its expected reward $x^\transp\theta^\star$ and for any $\theta\in\Re^d$ we denote the optimal arm and its value by
\begin{equation}\label{eq:optimal_arm_definition}
x^\star(\theta) = \arg \max_{x \in \X} x^\transp \theta, \quad\quad J(\theta) = \sup_{x \in \mathcal{X}} x^\transp \theta.
\end{equation}
Then $x^\star = x^\star(\theta^\star)$ is the optimal arm for $\theta^\star$ and $J(\theta^\star)$ is its optimal value. 
At each step $t$, the learner selects an arm $x_t \in \X$ based on the past observations (and possibly additional randomization), it observes the reward $r_{t+1} = x_t^\transp\theta^\star + \xi_{t+1}$, and it suffers a \textit{regret} equal to the difference in expected reward between the optimal arm $x^\star$ and the arm $x_t$. All the information observed up to time $t$ is encoded in the filtration $\F^x_{t} = \left( \F_{1}, \sigma(x_1,r_2,\dots, r_{t},x_t) \right)$, 
where $\F_1$ contains any prior knowledge. The objective of the learner is to minimize the \textit{cumulative regret} up to step $T$, i.e., $R(T) = \sum_{t=1}^T \big( x^{\star,\transp} \theta^\star - x_t^\transp \theta^\star\big)$.\\
%

\textbf{Notations.}  We use $\|\cdot\|$ to denote the $2$-norm and $x^\transp$ to denote the transpose of $x\in\mathbb{R}^d$. For a positive definite matrix $M \in \mathbb{R}^{d\times d}$, we denote as $\| \cdot \|_{M}$ the weighted $2$-norm defined by $\|x\|_M^2 = x^\transp M x$ for any $x \in \mathbb{R}^d$. We use $\lambda_{\min}(M)$ and $\lambda_{\max}(M)$ to denote the minimum and maximum eigenvalues of the positive semi-definite matrix $M$, respectively. We use $\I\{E\}$  to denote the indicator function of the event $E$.\\

We impose the following assumptions on the problem structure and the noise $\xi_{t+1}$.

\begin{assumption}[Arm set]\label{asm:arm.set}
The arm set $\X$ is a bounded closed (and hence compact) subset of $\mathbb{R}^d$ such that $\|x\| \leq 1$ for all $x \in \X$.\footnote{A more common assumption is to assume that $\|x\| \leq X$ for all $x \in \X$. However, the structure of the proof is not affected by this modification so we set $X=1$ for sake of clarity.}
\end{assumption}

\begin{assumption}[Bandit parameter]\label{asm:param.set}
There exists $S\in\Re^+$ such that $\|\theta^\star\| \leq S$ and $S$ is known.
\end{assumption}

\begin{assumption}[Noise]\label{asm:subgaussian}
The noise process $\{\xi_t\}_t$ is a martingale difference sequence given $\F^x_t$ and it is conditionally $R$-subgaussian for some constant $R \geq 0$,
\begin{align}
\begin{split}
\forall t \geq 1,& \hspace{2mm} \mathbb{E}\left[ \xi_{t+1} | \F^x_{t} \right] = 0, \\
\forall \alpha \in \mathbb{R},& \hspace{2mm} \mathbb{E}\left[e^{\alpha \xi_{t+1}} \hspace{1mm} | \hspace{1mm} \F_{t}^x \right] \leq \exp \big( \alpha^2 R^2/2\big).
\end{split}
\label{eq:subgaussian_definition}
\end{align}
\end{assumption}

\textbf{Technical tools.} Let $(x_1,\ldots,x_t)\in\X^t$ be a sequence of arms and $(r_2,\ldots,r_{t+1})$ be the corresponding rewards, then $\theta^\star$ can be estimated by regularized least-squares (RLS). For any regularization parameter $\lambda\in\Re^+$, the design matrix and the RLS estimate are defined as
\vspace{-0.1in}
\begin{equation}\label{eq:design.matrix.rls}
V_{t} = \lambda I + \sum_{s=1}^{t-1} x_s x_s^\transp, \quad\quad\enspace \wh\theta_{t} = V_{t}^{-1} \sum_{s=1}^{t-1} x_s r_{s+1}.
\end{equation}
We recall an important concentration inequality for RLS estimates.

\begin{proposition}[Thm.~2 in~\cite{abbasi-yadkori2011improved}]\label{p:concentration}
For any $\delta \in (0,1)$, under Asm.~\ref{asm:arm.set},\ref{asm:param.set}, and~\ref{asm:subgaussian}, for any $\F^x_t$-adapted sequence $(x_1,\ldots,x_t)$, the RLS estimator $\wh\theta_t$ is such that for any fixed $t\geq 1$,
\begin{align}\label{eq:self_normalized2}
\begin{split}
&\| \wh{\theta}_t - \theta^\star \|_{V_t} \leq \beta_t(\delta), \\
\forall x \in \mathbb{R}^d, \hspace{1.5mm} &|x^\transp ( \wh{\theta}_t - \theta^\star) | \leq \|x\|_{V_t^{-1}} \beta_t(\delta),
\end{split}
\end{align}
%
w.p.\ $1-\delta$ (w.r.t.\ the noise $\{\xi_t\}_t$ and any source of randomization in the choice of the arms), where
\begin{align}\label{eq:beta}
\beta_t(\delta) = R \sqrt{2 \log \frac{(\lambda + t)^{d/2} \lambda^{-d/2}}{\delta}} + \sqrt{\lambda}S.
\end{align}
\end{proposition}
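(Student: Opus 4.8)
The plan is to reduce the weighted-norm bound to a self-normalized tail bound on the noise and then invoke the method of mixtures. First I would expand the RLS estimate using $r_{s+1}=x_s^\transp\theta^\star+\xi_{s+1}$, which gives $\sum_{s=1}^{t-1}x_s r_{s+1}=(V_t-\lambda I)\theta^\star+S_t$ with $S_t:=\sum_{s=1}^{t-1}x_s\xi_{s+1}$, and hence $\wh\theta_t-\theta^\star=V_t^{-1}S_t-\lambda V_t^{-1}\theta^\star$. Applying the triangle inequality for $\|\cdot\|_{V_t}$ splits the target into a noise term and a bias term: $\|\wh\theta_t-\theta^\star\|_{V_t}\le \|V_t^{-1}S_t\|_{V_t}+\lambda\|V_t^{-1}\theta^\star\|_{V_t}=\|S_t\|_{V_t^{-1}}+\lambda\|\theta^\star\|_{V_t^{-1}}$. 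The bias term is immediate: since $V_t\succeq\lambda I$ we have $V_t^{-1}\preceq\lambda^{-1}I$, so $\lambda\|\theta^\star\|_{V_t^{-1}}\le\sqrt{\lambda}\|\theta^\star\|\le\sqrt{\lambda}S$ by Asm.~\ref{asm:param.set}. All the work therefore concentrates on controlling the self-normalized noise $\|S_t\|_{V_t^{-1}}$.

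The difficulty is that $V_t$ is random and correlated with $S_t$, so a naive covering argument is wasteful; instead I would use the method of mixtures. For a fixed $\eta\in\Re^d$, the subgaussian assumption (Asm.~\ref{asm:subgaussian}) applied with $\alpha=\eta^\transp x_s$ (note that $x_s$ is $\F^x_s$-measurable) shows that $M_t^\eta:=\exp\big(\eta^\transp S_t-\tfrac{R^2}{2}\|\eta\|_{V_t-\lambda I}^2\big)$ is a nonnegative supermartingale with $\E[M_t^\eta]\le 1$. I would then average $M_t^\eta$ against a Gaussian prior $\eta\sim\N(0,(R^2\lambda)^{-1}I)$; by Fubini the mixture $M_t:=\int M_t^\eta\,dh(\eta)$ remains a supermartingale with $\E[M_t]\le1$. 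Completing the square in the resulting Gaussian integral — the one genuinely computational step — yields the closed form $M_t=(\det(\lambda I)/\det V_t)^{1/2}\exp\!\big(\|S_t\|_{V_t^{-1}}^2/(2R^2)\big)$.

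Since the statement is for a fixed $t$, a plain Markov inequality on $M_t$ suffices and no stopping-time construction is needed: $\Prob(M_t\ge 1/\delta)\le\delta$. Unwinding the closed form on the complementary event gives $\|S_t\|_{V_t^{-1}}^2\le 2R^2\log\big(\det(V_t)^{1/2}\lambda^{-d/2}/\delta\big)$ with probability $1-\delta$. To match $\beta_t(\delta)$ I would bound the determinant by the trace via AM--GM: $\det V_t\le(\operatorname{tr}(V_t)/d)^d\le(\lambda+t/d)^d\le(\lambda+t)^d$, using $\|x_s\|\le1$ from Asm.~\ref{asm:arm.set}, whence $\det(V_t)^{1/2}\lambda^{-d/2}\le(\lambda+t)^{d/2}\lambda^{-d/2}$. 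Substituting back and adding the $\sqrt{\lambda}S$ bias term reproduces $\beta_t(\delta)$ exactly, proving the first inequality. The second inequality is then a one-line Cauchy--Schwarz: $|x^\transp(\wh\theta_t-\theta^\star)|=|x^\transp V_t^{-1/2}V_t^{1/2}(\wh\theta_t-\theta^\star)|\le\|x\|_{V_t^{-1}}\|\wh\theta_t-\theta^\star\|_{V_t}\le\|x\|_{V_t^{-1}}\beta_t(\delta)$. The main obstacle is the supermartingale construction together with the mixture integral in the second paragraph; everything else is routine linear algebra.
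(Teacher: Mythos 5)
Your proof is correct, and it is essentially the standard argument behind the cited result: the paper does not prove this proposition itself but imports it from \citet{abbasi-yadkori2011improved}, whose Theorem~2 is established by exactly this decomposition into a self-normalized noise term plus a $\sqrt{\lambda}S$ bias term, the method-of-mixtures supermartingale with a Gaussian prior, and the determinant--trace bound. Your one deviation --- using plain Markov at a fixed $t$ rather than the maximal/stopping-time inequality that gives the uniform-in-$t$ version in the original reference --- is legitimate here because the proposition is stated for fixed $t$ (the paper then union-bounds over $t \leq T$ in Lemma~\ref{le:high_proba_concentration}).
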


At step $t$, we define the ellipsoid $\calE^{\rls}_{t} = \big\{ \theta \in \mathbb{R}^d \hspace{1mm} | \hspace{1mm} \| \theta - \wh{\theta}_{t} \|_{V_{t}} \leq \beta_{t}(\delta^\prime) \big\}$
%
%
centered around $\wh\theta_t$ with orientation defined by $V_t$ and radius $\beta_t(\delta^\prime)$, where $\delta^\prime = \delta / 4T $.
From Eq.~\ref{eq:self_normalized2} we have that $\theta^\star\in\calE^{\rls}_t$ with high probability. 
Finally, we report a standard result of RLS that, together with Prop.~\ref{p:concentration}, shows that the prediction error on the $x_t$s used to construct the estimator $\wh\theta_t$ is cumulatively small.

\begin{proposition}\label{p:self_normalized_determinant}
Let $\lambda \geq 1$, for any arbitrary sequence $(x_1, x_2, \ldots, x_{t})\in\X^{t}$ let $V_{t+1}$ be the corresponding design matrix (Eq.~\ref{eq:design.matrix.rls}), then
%
\begin{equation}\label{eq:natural_explored_direction_ls}
\sum_{s=1}^{t} \|x_s\|_{V_s^{-1}}^2 \leq 2 \log \frac{\det(V_{t+1})}{\det(\lambda I)} \leq 2 d \log \Big( 1 + \frac{t}{\lambda} \Big).
\end{equation}
\vspace{-0.1in}
\end{proposition}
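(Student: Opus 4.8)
The plan is to exploit the multiplicative structure of the determinant as the design matrix is updated one rank-one term at a time. Since $V_{s+1} = V_s + x_s x_s^\transp$, the matrix determinant lemma $\det(A + uu^\transp) = \det(A)\,(1 + u^\transp A^{-1} u)$ gives $\det(V_{s+1}) = \det(V_s)\,(1 + \|x_s\|_{V_s^{-1}}^2)$. Telescoping from $s=1$ to $t$ (with $V_1 = \lambda I$) yields $\det(V_{t+1}) = \det(\lambda I)\prod_{s=1}^{t}(1 + \|x_s\|_{V_s^{-1}}^2)$, and taking logarithms turns the product into the sum
\begin{equation*}
\log\frac{\det(V_{t+1})}{\det(\lambda I)} = \sum_{s=1}^{t}\log\big(1 + \|x_s\|_{V_s^{-1}}^2\big).
\end{equation*}
This identity is the backbone of both inequalities.

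For the first inequality I would invoke the elementary bound $u \leq 2\log(1+u)$, valid for all $u \in [0,1]$ (the difference $2\log(1+u)-u$ vanishes at $0$ and has nonnegative derivative $(1-u)/(1+u)$ on $[0,1]$). To apply it with $u = \|x_s\|_{V_s^{-1}}^2$ I must check that this quantity lies in $[0,1]$. Since $\lambda \geq 1$ we have $V_s \succeq \lambda I$, hence $V_s^{-1} \preceq \lambda^{-1} I$, so that $\|x_s\|_{V_s^{-1}}^2 \leq \lambda^{-1}\|x_s\|^2 \leq \lambda^{-1} \leq 1$ using $\|x_s\| \leq X = 1$ from Asm.~\ref{asm:arm.set}. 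Summing the per-term bound over $s$ and substituting the identity above gives $\sum_{s=1}^{t}\|x_s\|_{V_s^{-1}}^2 \leq 2\sum_{s=1}^{t}\log(1+\|x_s\|_{V_s^{-1}}^2) = 2\log\frac{\det(V_{t+1})}{\det(\lambda I)}$.

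For the second inequality I would control the determinant of $V_{t+1}$ by its trace via AM-GM on eigenvalues: writing $\mu_1,\dots,\mu_d$ for the nonnegative eigenvalues of $V_{t+1}$, $\det(V_{t+1}) = \prod_i \mu_i \leq \big(\tfrac{1}{d}\sum_i \mu_i\big)^d = \big(\tfrac{1}{d}\,\textnormal{tr}(V_{t+1})\big)^d$. Since $\textnormal{tr}(V_{t+1}) = d\lambda + \sum_{s=1}^{t}\|x_s\|^2 \leq d\lambda + t$, I obtain $\det(V_{t+1}) \leq (\lambda + t/d)^d$, and dividing by $\det(\lambda I) = \lambda^d$ yields $\det(V_{t+1})/\det(\lambda I) \leq (1 + t/(d\lambda))^d \leq (1 + t/\lambda)^d$. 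Taking logarithms and chaining with the first inequality completes the displayed bound.

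I do not expect any serious obstacle: the argument is entirely deterministic and elementary. The only point requiring care is the range check $\|x_s\|_{V_s^{-1}}^2 \leq 1$ needed to apply $u \leq 2\log(1+u)$; this is exactly where the hypotheses $\lambda \geq 1$ and $\|x\| \leq 1$ enter, and it is the reason the normalization $X = 1$ is imposed in Asm.~\ref{asm:arm.set}.
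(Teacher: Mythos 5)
Your proof is correct and follows the standard argument that the paper relies on (this proposition is stated without proof as a known result, essentially Lemmas~10--11 of \citet{abbasi-yadkori2011improved}): the determinant telescoping via the matrix determinant lemma, the bound $u \leq 2\log(1+u)$ on $[0,1]$ justified by $\lambda \geq 1$ and $\|x_s\| \leq 1$, and the trace/AM-GM bound on $\det(V_{t+1})$. Nothing is missing.
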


This result plays a central role in most of the proofs for linear bandit, since the regret is usually related to $||x_s||_{V_s^{-1}}$ and Prop.~\ref{p:self_normalized_determinant} is used to bound its cumulative sum. While~\citet{agrawal2012thompson} achieve this by dividing arms in saturated and unsaturated, we follow a different path that leverages the core features of the problem (structure of $J(\theta)$) and of \ts (probability of being optimistic).

\section{Linear Thompson Sampling}\label{sec:ts}

\citet{agrawal2012thompson} define \ts for linear bandit as a Bayesian algorithm where a Gaussian prior over $\theta^\star$ is updated according to the observed rewards, a random sample is drawn from the posterior, and the corresponding optimal arm is selected at each step.

\begin{figure}[t]
\begin{center}
\vspace{-0.1in}
\bookboxx{
\begin{small}
    \begin{algorithmic}[1]
        \renewcommand{\algorithmicrequire}{\textbf{Input:}}
        \renewcommand{\algorithmicensure}{\textbf{Output:}}
        \vspace{-0.02in}
        \REQUIRE $\hat{\theta}_1$, $V_1 = \lambda I$, $\delta$, $T$
        \STATE Set $\delta' = \delta/(4T)$
        \FOR{$t = \{1, \dots, T\}$}
            \STATE Sample $\eta_t \sim \distro$
            \STATE Compute parameter 
            \vspace{-0.05in}
            $$\wt\theta_{t} = \wh\theta_{t} + \beta_{t} (\delta') V_{t}^{-1/2} \eta_t$$
            \vspace{-0.15in}
            \STATE Compute optimal arm 
            \vspace{-0.05in}
            $$x_t = x^\star(\wt\theta_t) = \arg \max_{x \in \X} x^\transp \wt\theta_t$$
            \vspace{-0.14in}
            \STATE  Pull arm $x_t$ and observe reward $r_{t+1}$
            \STATE  Compute $V_{t+1}$ and $\wh\theta_{t+1}$ using Eq.~\ref{eq:design.matrix.rls}
        \ENDFOR
    \end{algorithmic}
    \vspace{-0.1in}
    \caption{\small Thompson sampling algorithm.}
    \label{alg:ts}
\end{small}
}
\vspace{-0.2in}
\end{center}
\end{figure}

As hinted by~\citet{agrawal2012thompson}, we show that \ts can be defined as a generic randomized algorithm constructed on the \rls-estimate rather than an algorithm sampling from a Bayesian posterior (see Fig.~\ref{alg:ts}). At any step $t$, given $\rls$-estimate $\wh\theta_t$ and the design matrix $V_t$, \ts samples a \textit{perturbed} parameter $\wt\theta_t$ as
\begin{equation}\label{eq:gaussian_sampling}
\wt{\theta}_{t} = \wh{\theta}_{t} + \beta_{t}(\delta^\prime) V_{t}^{-1/2} \eta_t,
\end{equation}
where $\eta_t$ is a random sample drawn i.i.d.\ from a suitable multivariate distribution $\distro$, which does not need to be associated with an actual posterior over $\theta^\star$. Then the optimal arm $x_t = x^\star(\wt\theta_t)$ is chosen, a reward $r_{t+1}$ is observed and $V_t$ and $\wh\theta_t$ are updated according to Eq.~\ref{eq:design.matrix.rls}. Notice that the resulting distribution on $\wt{\theta}_t$ is obtained by rotation of $\eta_t$ according to the design matrix $V_t$ and by a rescaling $\beta_t(\delta)$. The computational complexity of \ts is dominated by computation of $x^\star(\wt\theta_t)$, which requires solving a linear optimization problem and by the sampling process from $\distro$. This is in contrast with OFUL~\citep{abbasi-yadkori2011improved}, which requires solving a bilinear optimization problem (i.e., $\arg\max_\theta\max_x x^\transp \theta$).

The key aspect to ensure small regret is that the perturbation $\eta_t$ is distributed so that \ts explores \textit{enough} but \textit{not too much}. This translates into the following conditions on $\distro$.

\begin{definition}\label{def:ts.exploration}
$\distro$ is a multivariate distribution on $\mathbb{R}^d$ absolutely continuous with respect to the Lebesgue measure 
which satisfies the following properties:
\vspace{-0.1in}
\begin{enumerate}
\item \textit{(anti-concentration)} there exists a strictly positive probability $p$ such that for any $u \in \mathbb{R}^d$ with $\|u\| \!=\! 1$, 
$$\mathbb{P}_{\eta \sim \distro} \big(u^\transp \eta  \geq 1 \big) \geq p,$$
\vspace{-0.3in}
\item \textit{(concentration)} there exists $c,c^\prime$ positive constants such that $\forall \delta \in (0,1)$ 
$$\mathbb{P}_{\eta \sim \distro} \bigg( \|\eta\| \leq \sqrt{c d \log \frac{c^\prime d}{\delta} } \bigg) \geq 1 - \delta.$$
\vspace{-0.15in}
\end{enumerate} 
\end{definition}
%
Once interpreted in the construction of $\wt\theta_t$, the definition of $\distro$ basically requires \ts to explore far enough from $\wh\theta_t$ (anti-concentration) but not too much (concentration). This implies that \ts performs ``useful'' exploration with enough frequency (notably it performs optimistic steps), but without selecting arms with too large regret.
Let $\gamma_t(\delta) = \beta_t(\delta^\prime)\sqrt{c d \log(c^\prime d/\delta)}$, then we introduce the high-probability ellipsoid $\calE^{\ts}_{t} = \{ \theta \in \mathbb{R}^d \hspace{1mm} | \hspace{1mm} \| \theta - \wh{\theta}_{t} \|_{V_{t}} \leq \gamma_{t}(\delta^\prime)\}$.
%
%
The difference between $\calE^{\rls}_{t}$ and $\calE_t^\ts$ lies in the additional factor $\sqrt{d}$ in the definition of $\gamma_t(\delta)$ and it is crucial for both concentration and anti-concentration to hold at the same time. In Sect.~\ref{sec:proof} we prove that any distribution satisfying the conditions in Def.~\ref{def:ts.exploration} introduces the right amount of randomness to achieve the desired regret without actually satisfying any Bayesian assumption. Def.~\ref{def:ts.exploration} includes the Gaussian prior used by~\citet{agrawal2012thompson}, but also other types of distributions such as the uniform on the unit ball $\mathcal{B}_d(0,\sqrt{d})$ or distributions concentrated on the boundary of $\calE^\ts_t$ (refer to App.~\ref{sec:app_examples} for exact values of $c$, $c'$, and $p$ for uniform and Gaussian distributions).


\section{Sketch of the proof}\label{sec:geometry}

In this section we report a sketch of the proof providing a geometric intuition on the behavior of \ts and how its actions (i.e., the sampled $\wt\theta_t$ and the corresponding $x_t$) influence the regret.
For the sake of illustration, we consider the unit ball $\X = \{ \|x\| \leq 1 \} $, such that the optimal arm is just the projection of $\theta$ on the ball ($x^\star(\theta) = \theta/\|\theta\|$), and the optimal value is $J(\theta) = \theta^\transp \theta / \|\theta\| = \|\theta\|$.
We start by decomposing the regret using the definition of $J(\theta)$ as
%
%
\begin{align*}
R(T) &= \sum_{t=1}^T \Big(\big(x^{\star,\transp}\theta^\star - x_t^\transp \wt\theta_t\big) + \big(x_t^\transp \wt\theta_t  - x_t^\transp \theta^\star\big)\Big) \\
&=\underbrace{\sum_{t=1}^T \big(J(\theta^\star) - J(\wt\theta_t)\big)}_{R^{\ts}(T)} + \underbrace{\sum_{t=1}^T \big(x_t^\transp \wt\theta_t  - x_t^\transp \theta^\star\big)}_{R^{\rls}(T)},
\end{align*}
\vspace{-0.2in}

where $R^{\ts}$ depends on the randomization of \ts and $R^{\rls}$ mostly depends on the properties of \rls.\\

\textbf{Bounding $R^{\rls}(T)$.}
The decomposition $R^{\rls}(T) = \sum_{t=1}^T \big(x_t^\transp \wh{\theta}_t  - x_t^\transp \theta^\star\big) + \sum_{t=1}^T \big(x_t^\transp \wt \theta_t  - x_t^\transp \wh{\theta}_t \big)$,
%
%
shows that both \rls estimate $\wh\theta_t$ and \ts parameter $\wt\theta_t$ should concentrate appropriately. Since at each step $t$, $\wt\theta_t$ is sampled from $\distro$, the second term is kept under control by construction, while the first sum deals with the prediction error of RLS. 
As opposed to $R^\ts$, this error is not related to the exploration scheme and it is small for any sequence of arms. Intuitively, this is due to the fact that the RLS estimate is the minimizer of the regularized cumulative squared error $\wh\theta_{T+1} = \arg\min_\theta \big( \sum_{t=1}^{T} | r_{t+1} - x_t^\transp \theta |^2 + \lambda \| \theta \|^2 \big)$, so that $x_t^\transp \wh{\theta}_{T+1}$ is an accurate prediction \textit{on the arms observed so far}. The RLS minimizes the error in ``hindsight'' (i.e., after all rewards up to $T$) and therefore it also controls the \textit{online} error $| r_{t+1} - x_t^\transp\hat{\theta}_{t+1} |^2$, since by induction 
\begin{align*}
&\sum_{t=1}^{T} | r_{t+1} - x_t^\transp \wh\theta_{T+1} |^2 + \lambda \| \wh\theta_{T+1} \|^2 \geq \sum_{t=1}^{T} | r_{t+1} - x_t^\transp \wh\theta_{t+1} |^2 + \lambda \| \wh\theta_{1} \|^2.
\end{align*}
%

Having a small \textit{online} error also implies a small \textit{prediction} error $|r_{t+1} - x_t^\transp \hat{\theta}_{t}|^2$. In fact, using a recursive version of Eq.~\ref{eq:design.matrix.rls}, we have $\hat{\theta}_{t+1} = \hat{\theta}_{t} + V_{t}^{-1} x_t (1 + \|x_t\|^2_{V_t^{-1}})^{-1} ( r_{+1} - x_t^\transp \hat{\theta}_{t})$, which, together with $\|x_t\|^2_{V_{t}^{-1}} \leq 1/\lambda$, leads to $| r_{t+1} - x_{t}^\transp \hat{\theta}_{t+1} | \geq \frac{\lambda}{1 +  \lambda} |r_{t+1} - x_{t}^\transp \hat{\theta}_{t} | $.
%
Since the cumulative prediction error is small, then the associated regret $ \sum_{t=1}^T | x_t^\transp \wh{\theta}_t  - x_t^\transp \theta^\star|$ is also small. This result can be seen as an intrinsic \textit{on-policy} error guarantee of \rls.
Nonetheless, notice that while RLS minimizes the prediction error for any sequence of arms, this does not imply the consistency of the estimator. For instance, when the same arm $x$ is repeatedly played, the unknown parameter $\theta^\star$ is well-estimated in the direction of $x$ (thus making $R^\rls(T)$ small) but it is poorly estimated in any other directions. This shows the need for a careful exploration strategy to recover consistency and hence a sub-linear regret.\\

\textbf{Bounding $R^{\ts}(T)$.}
We denote by $R^\ts_t = J(\theta^\star)-J(\wt\theta_t)$ each term in $R^{\ts}(T)$. 
For optimistic algorithms this term is bounded by 0 at any step since w.h.p. $J(\wt\theta_t) \geq J(\theta^*)$ by construction. In the Bayesian regret analysis of \ts, this term is equal to $0$ by assumption that $\theta^*$ is drawn from the same prior as $\wt\theta_t$. On the other hand, in the frequentist analysis, we have to control the deviations caused by the random sampling of $\wt\theta_t$. This is achieved by showing that the arms selected by \ts provide ``useful'' information about $\theta^\star$ and contribute to keep the regret small. We follow three steps: \textbf{1)} we show that the regret is related to the sensitivity of $J$ w.r.t.\ the errors in estimating $\theta^\star$ and we bound the regret with the gradient of $J(\theta)$ at any \textit{optimistic} $\theta$; \textbf{2)} we show how the gradient in a point $\theta$ is intrinsically related to its corresponding optimal arm $x^\star(\theta)$; \textbf{3)} since we prove that \ts is frequently optimistic, then we can finally link $x^\star(\theta)$ to $x_t = x^\star(\wt\theta_t)$ and Prop.~\ref{p:self_normalized_determinant} allows us to finally bound the overall regret.\\[0.05in]
 %
\begin{figure}[h]
\begin{center}
\includegraphics[width=0.49\textwidth]{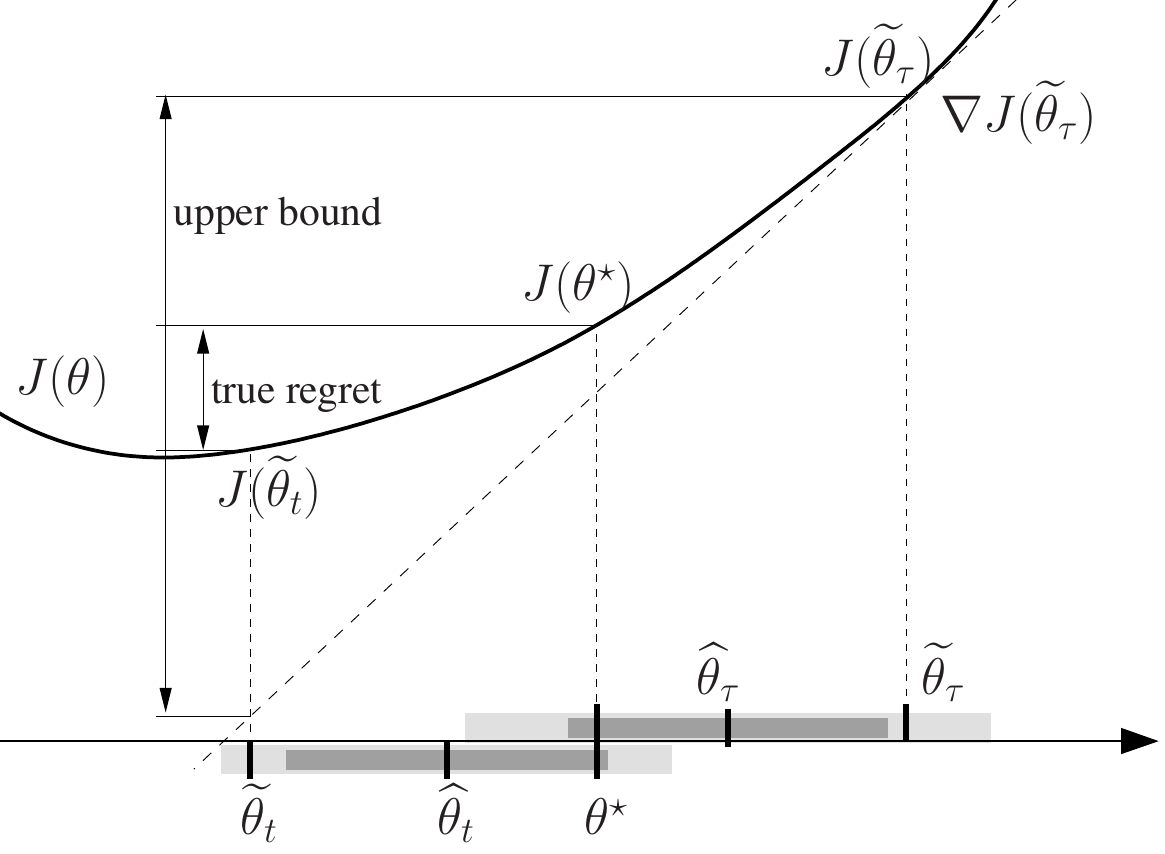}
\hspace{0.5cm}
\includegraphics[width=0.45\textwidth]{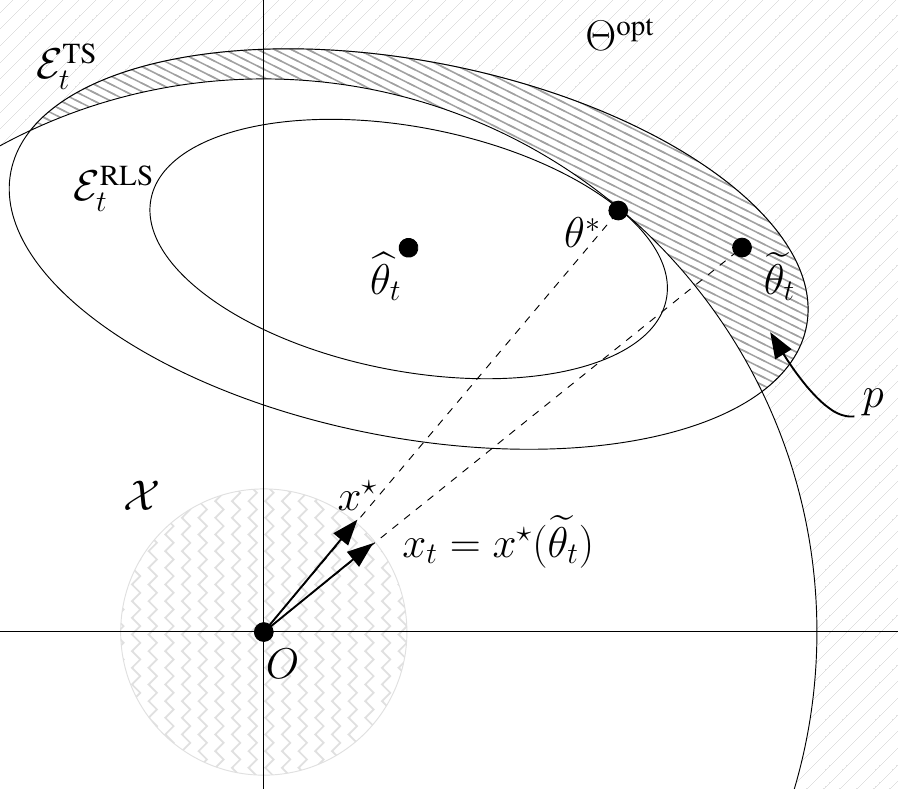}
\end{center}
\vspace{-0.1in}
\caption{\small Illustration of the steps \textbf{2)} and \textbf{3)} of the proof in $\Re^1$ and $\Re^2$. \textit{Left:} The regret at step $t$ could be bounded by the gradient of the function $J$ at a previous optimistic $\wt\theta_\tau$ times the distance between $\wt\theta_\tau$ and the current $\wt\theta_t$. Notice that $\theta^\star$ is always included in $\calE_t^\rls$ (in dark gray) and thus $\wt\theta$s sampled from $\calE_t^\ts$ (in light gray) are never too far. \textit{Right:} \ts has a constant probability of being optimistic thanks to the over-sampling of $\distro$. }
\label{fig:illustration}
\vspace{-0.1in}
\end{figure}
\textit{Step 1 (regret and sensitivity of $J$).}
We first show why the exploration of TS should be \textit{well adapted} to $J(\theta)$. Using the definition of $J(\theta) = \|\theta\|$ we have
\begin{align*}
R^\ts_t\!\!=\!J(\theta^\star\!)\! - \!J(\wt\theta_t)\! =\! \|\theta^\star\!\|\! -\! \|\wt\theta_t\| \!\leq\! \|\theta^\star\!\! -\! \wt\theta_t\| \!\leq \!\frac{\|\theta^\star\!\! -\! \wt\theta_t\|_{V_t}}{\sqrt{\lambda_{\min,t}}},
\end{align*}
where $\lambda_{\min,t}$ is the smallest eigenvalue of $V_t$. This bound shows that it is sufficient to estimate $\theta^\star$ accurately over all its components (i.e., $\lambda_{\min,t}$ tends to infinity) to obtain a no-regret algorithm. Nonetheless, the desired regret bound of $O(\sqrt{T})$ is obtained only if $\lambda_{\min,t}$ increases as $O(t)$. While this could be achieved by a fully explorative algorithm (e.g., a round robin over the canonic vectors $e_i$ reduces the ellipsoid $\calE_t^{\ts}$ to a ball of radius $\lambda_{\min,t}$), it would severely increase the second term of $R^\rls(T)$ and cause an overall linear regret\footnote{This happens because $x_t$ would be optimal w.r.t.\ a $\wt\theta_t$, which is \textit{not} in the ellipsoid $\calE_t^{\rls}$.}. Fortunately, inspecting the definition of $R^\ts_t$ reveals that not all components of $\theta^\star$ must be equally well estimated. In fact, we have w.h.p. that
\begin{align*}
R^\ts_t \leq \sup_{\theta\in\calE_t^\rls} \sup_{\theta'\in\calE_t^\ts} \big(J(\theta) - J(\theta')\big).
\end{align*}
This shows that $R^\ts_t$ is determined by the \textit{diameter} of ellipsoid $\calE_t^\ts$ w.r.t.\ $J$, which suggests that the estimation of $\theta^\star$ should be more accurate on the dimensions on which $J$ is more sensitive. In the case of $\X$ unit ball, the most sensitive direction of $J$ is $\theta^\star/\|\theta^\star\|$ itself and Fig.~\ref{fig:illustration.orientation} illustrates two opposite cases where the accuracy in the estimation of $\theta^\star$ is the same (i.e., $V_t$ has the same eigenvalues) but the regret may be very different. This is supported by the numerical experiment reported on Fig.~\ref{fig:experiment.consistency}: in the $2$ dimensional case, the eigenvalues $\lambda_{\max,t}$ and $\lambda_{\min,t}$ exhibit different divergence rate. While $1/ \lambda_{\max,t}$ decreases as $1/t$, $1/\lambda_{\min,t}$ decreases as $1 / \sqrt{t}$, which according to a direct consistency argument, is not enough to guarantee a $\sqrt{T}$ regret. However, the picture on the r.h.s shows that the diameter of the ellipsoid $\calE_t^{\ts}$ w.r.t $J$ decreases as $1/\sqrt{t}$. This is due to the fact that the sampling ellipsoid $\calE_t^{\ts}$ tends to align with the first configuration of Fig.~\ref{fig:illustration.orientation}. It implies that, on direction where $J$ is \textit{very sensitive}, the diameter of the ellipsoid shrinks appropriately (scaling with $1/ \lambda_{\max,t} \approx 1 / t$) whereas on direction where $J$ is \textit{less sensitive}, a slower decay (scaling with $1/ \lambda_{\min,t} \approx 1 / \sqrt{t}$)  of the diameter of the ellipsoid still guarantees the deviation in $J$ to be small. Therefore, the overall deviation in any direction decreases as $1/\sqrt{t}$, inducing a $\sqrt{T}$ regret.

\begin{figure}[h]
\begin{center}
\includegraphics[width=0.5\textwidth]{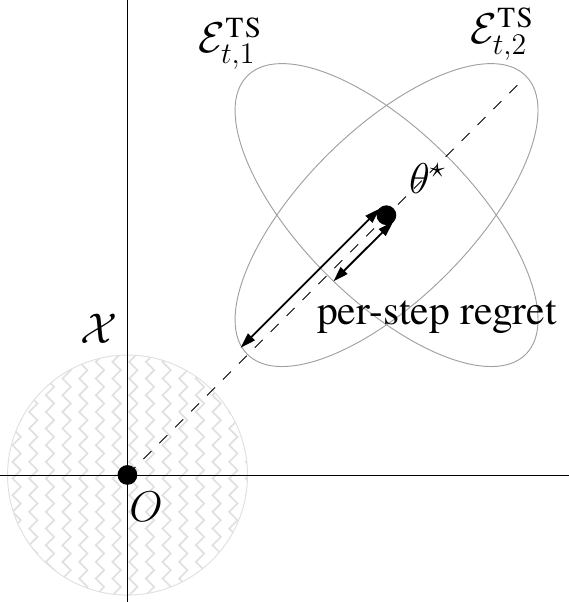}
\end{center}
\vspace{-0.1in}
\caption{\small While $\calE_{t,1}^\ts$ and $\calE_{t,2}^\ts$ have an equivalent accurate estimation of $\theta^\star$, $\calE_{t,1}^\ts$ has smaller regret than $\calE_{t,2}^\ts$.}
\label{fig:illustration.orientation}
\vspace{-0.2in}
\end{figure}
\vspace{0.1in}
\begin{figure}[!h]
\centering
\setlength{\unitlength}{\textwidth}
\begin{picture}(1,0.5)
\put(-.17,0){\includegraphics[width=1.3\textwidth]{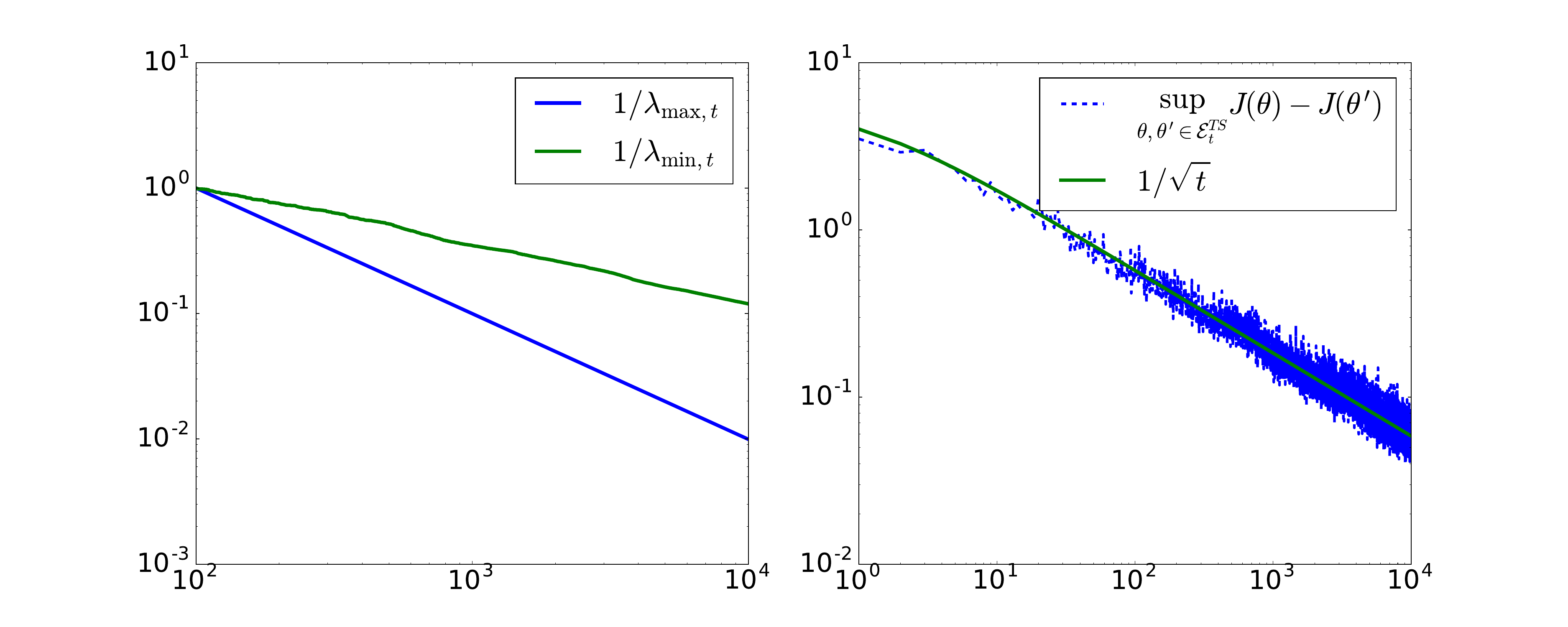}}
\end{picture}
\vspace{-0.3in}
\caption{\small Left: loglog plot of the inverse of the eigenvalues of the design matrix $V_t$ w.r.t. $t$. Rates of convergence are $1/t$ and $1/\sqrt{t}$. Right: loglog plot of the deviation in $J$ over $\calE^\ts_t$ w.r.t. $t$ (blue dashed line). The rate of convergence is $1/\sqrt{t}$ (green line).}
\label{fig:experiment.consistency}
\vspace{-0.1in}
\end{figure}

Let $\Theta^{\opt} = \{\theta: J(\theta) \geq J(\theta^\star)\}$ be the set of optimistic parameters. In our example $J(\theta) = \|\theta\|$ is convex thus we can make explicit the dependency of the regret on the sensitivity of $J$ through its gradient evaluated at any $\theta\in\Theta^{\opt}$ as (see Prop.~\ref{pr:J_support_function} for the general case)
\begin{align*}
R^\ts_t \leq \sup_{\theta'\in\calE_t^\ts} J(\theta) - J(\theta') \leq \sup_{\theta'\in\calE_t^\ts} \nabla J(\theta)^\transp (\theta-\theta'),
\end{align*}
which shows that the regret of non-optimistic $\wt\theta_t$ is bounded by the gradient of $J(\theta)$ at any optimistic $\theta$ and its distance to any other point in the \ts ellipsoid. 

\textit{Step 2 (sensitivity of $J$ and optimal arm).}
According to Prop.~\ref{p:concentration}, the difference $\theta-\theta'$ in the previous inequality is well controlled whenever $\theta$ belongs to the ellipsoid, while the first term cannot be immediately controlled by the algorithm. Nonetheless, we notice that since $J(\theta) = \|\theta\|$, then $\nabla J(\theta) = \theta/\|\theta\| = x^\star(\theta)$ (see Lem.~\ref{lem:gradient.optimal.arm} for the general case). This shows how selecting the optimal arm associated to an optimistic $\theta$ is equivalent to controlling the gradient of $J$, which results in
\begin{align*}
R^\ts_t \leq \sup_{\theta'\in\calE_t^\ts} x^\star(\theta)^\transp (\theta-\theta').
\end{align*}
From Prop.~\ref{p:self_normalized_determinant}, we could conclude that the regret would be cumulatively small if $x^\star(\theta)$ corresponded to the arms chosen by the \ts ($x_t = x^\star(\wt\theta_t)$). As a result, we need a $\theta$ \textbf{1)} that is optimistic (i.e., $\theta\in\Theta^{\opt}$), \textbf{2)} it belongs or is close to the ellipsoid $\calE_t^{\ts}$ and \textbf{3)} it is used to select an arm $x_t$. The first two requirements are at the core of the choice of the TS distribution in Def.~\ref{def:ts.exploration} where the anticoncentration property guarantees enough probability to be optimistic, while the concentration property implies that $\wt\theta$s are within a small ellipsoid.
Let $\tau<t$ be any step when \ts selects $\wt\theta_{\tau} \in \Theta^{\opt}$ with corresponding arm $x_\tau = x^\star(\wt\theta_\tau)$, then we have (see an illustration of this bound in Fig.~\ref{fig:illustration} in the 1-$d$ case)
\begin{align*}
R^\ts_t \leq \sup_{\theta'\in\calE_t^\ts} x_\tau^\transp (\wt\theta_\tau-\theta') \leq \|x_\tau\|_{V_\tau^{-1}} \sup_{\theta'\in\calE_t^\ts} \|\wt\theta_\tau-\theta'\|_{V_\tau}.
\end{align*}
Introducing $\theta^\star$ and using the fact that the design matrices forms a non-decreasing sequence (e.g. $V_\tau \leq V_t$), we decompose
\begin{align*}
 \sup_{\theta'\in\calE_t^\ts} \|\wt\theta_\tau-\theta'\|_{V_\tau} &\leq \|\wt\theta_\tau-\theta^\star\|_{V_\tau} + \sup_{\theta'\in\calE_t^\ts}  \|\theta^\star - \theta'\|_{V_\tau}\\
&\leq \|\wt\theta_\tau-\theta^\star\|_{V_\tau} + \sup_{\theta'\in\calE_t^\ts}  \|\theta^\star - \theta'\|_{V_t} 
\end{align*}
Since by Prop.~\ref{p:concentration} $\theta^\star$ is contained in all confidence ellipsoids with high probability, then 
\begin{align*}
R^\ts_t &\leq  \big( \beta_\tau(\delta^\prime) + \gamma_\tau(\delta^\prime) + \beta_t(\delta^\prime) + \gamma_t (\delta^\prime)\big) \|x_\tau\|_{V_\tau^{-1}}\\
&\leq \big( 2 \beta_T(\delta^\prime) + 2 \gamma_T(\delta^\prime) \big)\|x_\tau\|_{V_\tau^{-1}}.
\end{align*}
%
Let $K$ be the number of times $\wt\theta_t\in\Theta^{\opt}$, $t_k$ the corresponding steps, and $\nu_k = t_k - t_{k-1}$, then the final regret can be written as 
\begin{align*}
R^{\ts}(T) \leq 2 \big(\beta_T(\delta^\prime) + \gamma_T(\delta^\prime) \big) \sum_{k=1}^K \nu_k \|x_{t_k}\|_{V_{t_k}^{-1}}.
\end{align*}

\textit{Step 3 (optimism).}
This bound shows the importance that TS is optimistic with high frequency. In fact, whenever $\wt\theta_t$ is in $\Theta^{\opt}$, not only the corresponding instantaneous regret $R_t^\ts$ is upper-bounded by 0, but the exploration performed by playing arm $x^\star(\wt\theta_t)$ has also a positive impact in controlling the regret for any subsequent non-optimistic step. Consider the extreme case when TS is never optimistic, then $K=1$, $\nu_1=T$ and $R^\ts(T) = O(T)$. On the other hand, if TS is optimistic with a constant frequency, then we can easily show that $R^{\ts}(T)$ is bounded by $\wt{O}(\sqrt{T})$. Consider the case where an optimistic $\theta$ is chosen with probability $p$. Since $\E[\nu_k] = 1/p$, we can prove that w.h.p.\ $R^{\ts}(T) \leq \wt{O}(1/p\sqrt{T})$ by Cauchy-Schwarz and Prop.~\ref{p:self_normalized_determinant} applied to $\sum_{k=1}^K \|x_{t_k}\|^2_{V_{t_k}^{-1}}$, where $K\approx T$. Unfortunately, sampling $\wt\theta_t$ from the RLS ellipsoid $\calE_t^{\rls}$ may have a very small probability of being optimistic (see e.g., Fig.~\ref{fig:illustration}, where sampling uniformly in $\calE_t^{\rls}$ has zero probability to return a $\wt\theta_t \in \Theta^{\opt}$). For this reason, TS is required to draw $\wt\theta_t$ from a distribution \textit{over-sampling} by a factor $\sqrt{d}$ w.r.t.\ $\calE_t^{\rls}$ as in the definition of $\distro$. This guarantees a fixed probability $p$ of being optimistic (see Lem.~\ref{le:probability_optimistic}) and the final desired regret.


\section{Formal Proof}\label{sec:proof}

In this section we report the main steps of the regret analysis, while we postpone technical lemmas to the supplementary material. We prove the following result.

\begin{theorem}\label{th:regret_ts}
Under assumptions~\ref{asm:arm.set},\ref{asm:param.set},\ref{asm:subgaussian}, the regret of \ts is bounded w.p. $1-\delta$ as ($\delta^\prime = \frac{\delta}{4T}$)
\begin{align}\label{eq:regret_bound}
R(T) &\leq \big(\beta_T(\delta^\prime) +  \gamma_T(\delta^\prime)(1 + \frac{4}{p}) \big) \sqrt{ 2 T d \log \big( 1 + \frac{T}{\lambda} \big) } + \frac{4 \gamma_T(\delta^\prime)}{p} \sqrt{\frac{8 T } {\lambda} \log \frac{4}{\delta} }.
\end{align}

\end{theorem}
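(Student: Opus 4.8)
The plan is to bound the two halves of the decomposition $R(T)=R^{\ts}(T)+R^{\rls}(T)$ separately and then glue them together with a union bound whose budget is controlled by the choice $\delta'=\delta/(4T)$. First I would dispatch $R^{\rls}(T)=\sum_t x_t^\transp(\wh\theta_t-\theta^\star)+\sum_t x_t^\transp(\wt\theta_t-\wh\theta_t)$. The first sum is exactly the self-normalized prediction error, so Prop.~\ref{p:concentration} gives $|x_t^\transp(\wh\theta_t-\theta^\star)|\le\beta_t(\delta')\|x_t\|_{V_t^{-1}}$ simultaneously for all $t\le T$ after a union bound. For the second sum, the construction $\wt\theta_t=\wh\theta_t+\beta_t(\delta')V_t^{-1/2}\eta_t$ yields $x_t^\transp(\wt\theta_t-\wh\theta_t)=\beta_t(\delta')(V_t^{-1/2}x_t)^\transp\eta_t\le\beta_t(\delta')\|\eta_t\|\,\|x_t\|_{V_t^{-1}}$, and the concentration clause of Def.~\ref{def:ts.exploration} bounds $\|\eta_t\|$ so that this term is at most $\gamma_t(\delta')\|x_t\|_{V_t^{-1}}$. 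Using monotonicity of $\beta_t,\gamma_t$ and Cauchy--Schwarz together with the determinant bound of Prop.~\ref{p:self_normalized_determinant} then gives $R^{\rls}(T)\le(\beta_T(\delta')+\gamma_T(\delta'))\sqrt{2Td\log(1+T/\lambda)}$, which accounts for the $\beta_T(\delta')$ and one of the $\gamma_T(\delta')$ contributions in the statement.

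For the randomized part $R^{\ts}(T)=\sum_t(J(\theta^\star)-J(\wt\theta_t))$ I would make rigorous the three geometric steps of Sect.~\ref{sec:geometry} for a general arm set. Invoking the support-function representation of $J$ and its supergradient characterization $\nabla J(\theta)=x^\star(\theta)$ (Prop.~\ref{pr:J_support_function}, Lem.~\ref{lem:gradient.optimal.arm}), convexity gives, for any optimistic $\theta\in\Theta^{\opt}$, $R^{\ts}_t=J(\theta^\star)-J(\wt\theta_t)\le x^\star(\theta)^\transp(\theta-\wt\theta_t)$. Specializing $\theta=\wt\theta_\tau$ at the most recent optimistic step $\tau<t$, so that $x^\star(\theta)=x_\tau$ and $V_\tau\preceq V_t$, I apply Cauchy--Schwarz in the $V_\tau$-norm and insert $\theta^\star$; since $\theta^\star\in\calE^{\rls}_\tau\cap\calE^{\rls}_t$ and $\wt\theta_\tau\in\calE^{\ts}_\tau$, $\wt\theta_t\in\calE^{\ts}_t$ hold with high probability, this collapses to $R^{\ts}_t\le 2(\beta_T(\delta')+\gamma_T(\delta'))\|x_\tau\|_{V_\tau^{-1}}\le 4\gamma_T(\delta')\|x_\tau\|_{V_\tau^{-1}}$ (using $\beta_T\le\gamma_T$). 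Grouping the steps into the blocks delimited by consecutive optimistic draws, with $t_k$ the optimistic steps and $\nu_k$ the corresponding block lengths, this sums to $R^{\ts}(T)\le 4\gamma_T(\delta')\sum_k\nu_k\|x_{t_k}\|_{V_{t_k}^{-1}}$.

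It remains to bound $\sum_k\nu_k\|x_{t_k}\|_{V_{t_k}^{-1}}$, the one quantity that couples the deterministic geometry of Prop.~\ref{p:self_normalized_determinant} with the randomness of \emph{when} \ts is optimistic. The optimism lemma (Lem.~\ref{le:probability_optimistic}), a consequence of the anti-concentration clause of Def.~\ref{def:ts.exploration}, gives $\Prob(\wt\theta_s\in\Theta^{\opt}\mid\F^x_s)\ge p$ at every step, so the forward gap is stochastically dominated by a geometric law and $\E[\nu_k\mid\F^x_{t_k}]\le 1/p$. I would then split $\nu_k=\E[\nu_k\mid\F^x_{t_k}]+(\nu_k-\E[\nu_k\mid\F^x_{t_k}])$: the predictable part contributes $\tfrac1p\sum_k\|x_{t_k}\|_{V_{t_k}^{-1}}\le\tfrac1p\sqrt{K}\sqrt{2d\log(1+T/\lambda)}\le\tfrac1p\sqrt{2Td\log(1+T/\lambda)}$ by Cauchy--Schwarz, Prop.~\ref{p:self_normalized_determinant} and $K\le T$, which is exactly the $\gamma_T(\delta')(4/p)$ term; the martingale part, whose increments carry the uniformly bounded weight $\|x_{t_k}\|_{V_{t_k}^{-1}}\le 1/\sqrt{\lambda}$ (as $V_{t_k}\succeq\lambda I$ and $\|x\|\le1$), is controlled by an Azuma/Freedman-type deviation inequality, producing the additive $\tfrac{4\gamma_T(\delta')}{p}\sqrt{\tfrac{8T}{\lambda}\log\tfrac{4}{\delta}}$ term.

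The main obstacle is precisely this last concentration step: the gaps $\nu_k$ are heavy-tailed (geometric, hence unbounded), so a naive Azuma bound on the raw increments does not apply; one must instead exploit their conditional sub-exponential structure, truncate the gaps at $O(\log(1/\delta)/p)$, or re-express $\sum_k\nu_k(\cdot)$ as a sum over all $T$ steps with bounded predictable increments before concentrating. A secondary technical point is the prefix of steps preceding the first optimistic draw, which possess no past optimistic reference $\tau<t$; their number is controlled with high probability by the same anti-concentration property and contributes only lower-order terms. Finally, collecting the high-probability events---validity of Prop.~\ref{p:concentration} and of the $\distro$-concentration across all $T$ steps, and the martingale deviation bound---through a union bound allotting budget $\delta/4$ to each (the source of the factor $4$ in both $\delta'=\delta/(4T)$ and $\log(4/\delta)$) yields the stated inequality.
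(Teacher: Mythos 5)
Your treatment of $R^{\rls}(T)$ is correct and matches the paper's. The problem is in $R^{\ts}(T)$: you follow the heuristic block decomposition of Sect.~\ref{sec:geometry} (refer each non-optimistic step back to the most recent optimistic step $\tau$, group into blocks of length $\nu_k$, and bound $\sum_k \nu_k\|x_{t_k}\|_{V_{t_k}^{-1}}$), and you yourself identify the resulting obstacle --- the $\nu_k$ are unbounded, conditionally geometric-like gaps, so Azuma on the raw increments fails --- without actually closing it. Truncation or a sub-exponential martingale inequality would introduce extra $\log$ factors and a nontrivial argument about the joint distribution of the random stopping times $t_k$ and the weights $\|x_{t_k}\|_{V_{t_k}^{-1}}$; as written, the step that produces the $\tfrac{4\gamma_T(\delta')}{p}\sqrt{\tfrac{8T}{\lambda}\log\tfrac{4}{\delta}}$ term is not a proof. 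The secondary issue you flag (steps before the first optimistic draw) is a symptom of the same structural choice.

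The paper's formal proof (Sect.~\ref{sec:proof}) avoids the blocks entirely, and this is the idea you are missing. At each step $t$, on $E_t$ one has $R_t^\ts \le J(\theta^\star)-\inf_{\theta\in\calE_t^\ts}J(\theta)$, and since this upper bound holds for \emph{every} optimistic $\theta\in\Theta_t^{\opt}$, it is bounded by the conditional expectation $\E[(J(\wt\theta)-\inf_{\theta\in\calE_t^\ts}J(\theta))\mid\F_t,\wt\theta\in\Theta_t^{\opt}]$ over a \emph{fresh hypothetical sample at the same step $t$}. Convexity and Lem.~\ref{lem:gradient.optimal.arm} turn this into $2\gamma_t(\delta')\,\E[\|x^\star(\wt\theta)\|_{V_t^{-1}}\mid\F_t,\wt\theta\in\Theta_t^{\opt},\wh E_t]$, and Lem.~\ref{le:probability_optimistic} (optimism probability $\ge p/2$, whence the $4/p$ rather than your $2/p$ bookkeeping) converts the conditional-on-optimism expectation into $\tfrac{2}{p}$ times the unconditional expectation $\E[\|x^\star(\wt\theta)\|_{V_t^{-1}}\mid\F_t]$. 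The remaining correction $\sum_t(\E[\|x^\star(\wt\theta)\|_{V_t^{-1}}\mid\F_t]-\|x_t\|_{V_t^{-1}})$ is a martingale with increments bounded almost surely by $2/\sqrt{\lambda}$, so plain Azuma gives exactly the $\sqrt{\tfrac{8T}{\lambda}\log\tfrac{4}{\delta}}$ term, and Cauchy--Schwarz with Prop.~\ref{p:self_normalized_determinant} handles $\sum_t\|x_t\|_{V_t^{-1}}$. Your last suggested repair (``re-express the sum over all $T$ steps with bounded predictable increments'') points in this direction, but the argument must be carried out at the level of per-step conditional expectations from the start, not retrofitted onto the block sum.
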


As anticipated in introduction, this bound is of order $\wt{O}(d^{3/2}\sqrt{T})$ and it entirely matches the result of~\citet{agrawal2012thompson}.
The analysis of the regret requires extra care in the definition of the filtrations. While in analyzing $R^\rls$ we consider all the knowledge up to step $t$ (i.e., including the sampled parameter $\wt\theta_t$), in $R^\ts$ we need to study the randomness of $\wt\theta_t$ conditional on all the information before sampling $\eta_t$. We introduce an additional filtration besides $\F_t^x$.

\begin{definition}
We define the filtration $\F_{t}$ as the accumulated information up to time $t$ before the sampling procedure, i.e., $\F_{t} = \left( \F_{1},\sigma(x_1,r_2,x_2,\dots,x_{t-1},r_{t-1}) \right)$.
\end{definition}

Notice that $\wh\theta_t$ and $V_{t}^{-1}$ are both $\F_{t}$ and $\F^x_{t}$ adapted, while $\wt\theta_{t}$ is a random variable w.r.t.\ $\F_{t}$ and it is fixed when considering $\F^x_{t}$. Hence we have $\F_{1} \subset \F_{2} \subset \F^x_2 \subset \F_3 \subset \F_3^x, \dots$. 
We are now ready to introduce the high-probability events we use in the rest of the proof.

\begin{definition}\label{def:concentration_events}
Let $\delta\in(0,1)$ and $\delta'=\delta/(4T)$ and $t\in[1,T]$. We define $\wh{E}_{t}$ as the event where the RLS estimate concentrates around $\theta^\star$ for all steps $s \leq t$, i.e., $\wh{E}_t = \big\{ \forall s \leq t, \hspace{2mm} \|\wh{\theta}_{s} - \theta^\star\|_{V_s} \leq \beta_s(\delta') \big\}$.
%
%
We also define $\wt{E}_{t}$ as the event where the sampled parameter $\wt{\theta}_s$ concentrates around $\wh\theta_s$ for all steps $s \leq t$, i.e., $\wt{E}_t = \big\{ \forall s \leq t, \hspace{2mm} \|\wt{\theta}_{s} - \wh{\theta}_{s}\|_{V_s} \leq \gamma_s(\delta') \big\}$.
%
%
\end{definition}

Then we have that $\wh{E}_{T} \subset \dots \subset \wh{E}_{1}$, $\wt{E}_{T} \subset \dots \subset \wt{E}_{1}$ and we use $E_t = \wh{E}_t \cap \wt{E}_t$. 

\begin{lemma}\label{le:high_proba_concentration}
Under Asm.~\ref{asm:param.set},~\ref{asm:subgaussian} we have $\mathbb{P}(\wh{E}_T \cap \wt{E}_T) \geq 1 - \frac{\delta}{2}$.
\end{lemma}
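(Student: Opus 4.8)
The plan is to bound the complementary probability $\Prob(\wh E^c \cup \wt E^c)$ by a union bound, controlling each of $\Prob(\wh E^c)$ and $\Prob(\wt E^c)$ by $\delta/4$. The choice $\delta' = \delta/(4T)$ is tuned precisely so that a per-step failure probability of $\delta'$, summed over the $T$ rounds, lands on $\delta/4$ for each of the two events; combining them then yields $\Prob(\wh E \cap \wt E) \geq 1 - \delta/2$.

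For $\wh E$, I would invoke Prop.~\ref{p:concentration}. Since the arms $x_s = x^\star(\wt\theta_s)$ produced by \ts are $\F^x_s$-adapted, the proposition applies at each fixed step $s$ and gives $\Prob\big(\|\wh\theta_s - \theta^\star\|_{V_s} > \beta_s(\delta')\big) \leq \delta'$. A union bound over $s = 1, \ldots, T$ then gives $\Prob(\wh E^c) \leq T\delta' = \delta/4$. Here I deliberately use the fixed-time form of the self-normalized bound and pay for the uniformity over $s$ with the factor $T$ inside $\delta'$.

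For $\wt E$, the key observation is that the \ts perturbation acts as an isometry in the relevant norm: since $\wt\theta_s - \wh\theta_s = \beta_s(\delta') V_s^{-1/2}\eta_s$ and $V_s^{-1/2}$ is symmetric with $V_s^{-1/2} V_s V_s^{-1/2} = I$, one obtains
\[
\|\wt\theta_s - \wh\theta_s\|_{V_s} = \beta_s(\delta')\,\|\eta_s\|.
\]
Hence the event $\{\|\wt\theta_s - \wh\theta_s\|_{V_s} \leq \gamma_s(\delta')\}$ is, after dividing by $\beta_s(\delta') > 0$, exactly $\{\|\eta_s\| \leq \sqrt{c d \log(c'd/\delta')}\}$, whose complement has probability at most $\delta'$ by the concentration property of $\distro$ in Def.~\ref{def:ts.exploration} applied with confidence level $\delta'$. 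Crucially this is a property of the sampling distribution alone and holds for each draw $\eta_s$ irrespective of the conditioning on the past, so a union bound over $s = 1, \ldots, T$ gives $\Prob(\wt E^c) \leq T\delta' = \delta/4$.

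The argument is essentially a clean double union bound, so there is no deep obstacle; the two points that must be handled with care are \textbf{(i)} the norm cancellation that turns the $V_s$-weighted concentration of $\wt\theta_s$ into a dimension-free tail bound on $\|\eta_s\|$ (note that $\beta_s(\delta')$ drops out, so the resulting threshold is independent of $s$), and \textbf{(ii)} the bookkeeping that makes the two per-step confidence levels and the two $T$-fold union bounds add up to exactly $\delta/2$, which is precisely what $\delta' = \delta/(4T)$ secures.
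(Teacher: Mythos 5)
Your proof is correct and follows essentially the same route as the paper's: a union bound over the $T$ steps for each of $\wh{E}$ and $\wt{E}$, using Prop.~\ref{p:concentration} for the former and the concentration property of $\distro$ (via the cancellation $\|\wt\theta_s - \wh\theta_s\|_{V_s} = \beta_s(\delta')\|\eta_s\|$) for the latter, with $\delta' = \delta/(4T)$ making the two failure probabilities sum to $\delta/2$. The only difference is cosmetic: you make the isometry argument explicit where the paper simply equates the two probabilities.
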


Conditioned on $\F_t$ and event $\wh{E}_t$, we have $\theta^\star\in\calE_t^\rls$, while on event $\wt{E}_t$ we have $\wt\theta_t \in \calE_t^\ts$, then we directly bound the regret as
\begin{align*}
R(T) &\leq \sum_{t=1}^T\! \big(\!J(\theta^\star) \!-\! J(\wt{\theta}_t)\!\big) \I\!\{E_t\}\! +\! \sum_{t=1}^T\! \big(\!x_t^\transp \wt{\theta}_t  \!-\! x_t^\transp \theta^\star\!\big) \I\!\{E_t\} \\
&\leq \sum_{t=1}^T R^\ts_t\I\{E_t\} \!+\! \sum_{t=1}^T R^\rls_t\I\{E_t\},
\end{align*}
w.p. $1-\delta/2$. In the interest of space we only report the formal proof to bound $R^\ts_t$, while the bound on $R^\rls(T)$ and the overall regret is postponed to App.~\ref{sec:app_proofs}.



%

Similar to the sketch in Sect.~\ref{sec:geometry}, the proof follows three steps: \textbf{1)} we use the convexity of $J$ to upper-bound the regret by its expectation conditioned on being optimistic and to relate it to the gradient of $J$, \textbf{2)} we relate the gradient of $J$ to the arms chosen by \ts over time, \textbf{3)} we show that despite the randomization, \ts has a constant probability of being optimistic.\\

\textbf{Step 1 (Regret and gradient of $J(\theta)$).}
On event $E_t$, $\wt\theta_t$ belongs to $\calE_t^{\ts}$ and thus
\begin{align*}
R_t^\ts \I\{E_t\} \leq \big(J(\theta^\star) - \inf_{\theta\in\calE_t^\ts}J(\theta)\big)\I\{\wh{E}_t\}.
\end{align*}
Recalling that $\Theta^{\opt}$ is the set of all optimistic $\theta$s, we can bound the previous expression by the expectation over any random choice of $\wt\theta$ in $\Theta_t^{\opt} := \Theta^{\opt} \cap \calE^\ts_t$ where we restrict the optimistic set to the high-probability sampling ellipsoid, that is 
\begin{align*}
R_t^\ts \leq \E\Big[\big(J(\wt\theta) - \inf_{\theta\in\calE_t^\ts}J(\theta)\big)\I\{\wh{E}_t\} \Big| \F_t, \wt\theta\in \Theta_t^{\opt}\Big],
\end{align*}
where $\wt\theta = \wh\theta_t + \beta_t(\delta^\prime)V_t^{-1/2}\eta$ with $\eta\sim\distro$ is the \ts sampling distribution. We now rely on the following characterization of $J(\theta)$ (see App.~\ref{sec:app_support}).

\begin{proposition}\label{pr:J_support_function}
For any set of arm $\X$ satisfying Asm.~\ref{asm:arm.set}, $J(\theta) = \sup_{x} x^\transp \theta$ has the following properties: \textbf{1)} $J$ is real-valued as the supremum is attained in $\X$, \textbf{2)} $J$ is convex on $\mathbb{R}^d$, \textbf{3)} $J$ is continuous with continuous first derivative except for a zero-measure set w.r.t.\ the Lebesgue's measure.
\end{proposition}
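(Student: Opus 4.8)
The plan is to handle the three claims in increasing order of difficulty, since (1) and (2) follow immediately from compactness and from $J$ being a support function, whereas (3) requires genuine work. For claim (1), I would invoke Asm.~\ref{asm:arm.set}: $\X$ is compact and, for each fixed $\theta$, the map $x \mapsto x^\transp\theta$ is continuous (indeed linear), so by the Weierstrass extreme value theorem the supremum defining $J(\theta) = \sup_{x\in\X} x^\transp\theta$ is attained at some $x^\star(\theta)\in\X$. Cauchy--Schwarz together with $\|x\|\le X = 1$ gives $|J(\theta)|\le\|\theta\|<\infty$, so $J$ is real-valued. For claim (2), I would observe that for each fixed $x$ the map $\theta\mapsto x^\transp\theta$ is linear, hence convex, and that the pointwise supremum of an arbitrary family of convex functions is convex; therefore $J$ is convex on $\mathbb{R}^d$.

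For claim (3) I would separate continuity, almost-everywhere differentiability, and continuity of the gradient. Rather than appeal to convexity for continuity, I would use the direct support-function estimate $|J(\theta)-J(\theta')|\le\sup_{x\in\X}|x^\transp(\theta-\theta')|\le\|\theta-\theta'\|$, which relies only on $\|x\|\le 1$ and shows $J$ is $1$-Lipschitz, hence continuous on all of $\mathbb{R}^d$. Being Lipschitz, $J$ is differentiable Lebesgue-almost everywhere by Rademacher's theorem (equivalently, this follows from the classical fact that a finite convex function is differentiable a.e.), so the non-differentiability set $N$ is null. To upgrade a.e.\ differentiability to continuity of $\nabla J$ on $\mathbb{R}^d\setminus N$, I would use the convex-analytic facts that the subdifferential map $\theta\mapsto\partial J(\theta)$ of a finite convex function is nonempty-, compact-, and convex-valued, locally bounded, and upper semicontinuous, and that $J$ is differentiable at $\theta$ exactly when $\partial J(\theta)$ is a singleton, in which case $\partial J(\theta)=\{\nabla J(\theta)\}$. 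Upper semicontinuity of $\partial J$ combined with single-valuedness then forces $\nabla J$ to be continuous on the differentiability set (this is precisely the content of the standard result, e.g.\ Rockafellar, Thm.~25.5).

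The routine parts are claims (1), (2), and the Lipschitz/continuity step of (3); the part deserving care is the last one, namely passing from ``differentiable a.e.''\ to ``the derivative is continuous where it exists.'' The subtlety is that the differentiability set need not be open, so one cannot simply quote a theorem about $C^1$ regularity on an open domain; the clean route is the upper semicontinuity of the subdifferential together with its reduction to a single point at differentiability points, and I expect this to be the main obstacle to state cleanly. It also dovetails with the sequel, since at a differentiability point one has $\partial J(\theta)=\arg\max_{x\in\X} x^\transp\theta = \{x^\star(\theta)\}$, which is exactly the identity $\nabla J(\theta)=x^\star(\theta)$ exploited in Step~2 of the regret analysis.
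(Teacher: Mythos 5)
Your proof is correct, and for claims (1) and (2) it coincides with the paper's own argument: compactness of $\X$ plus continuity of $x\mapsto x^\transp\theta$ gives attainment and finiteness, and convexity follows because $J$ is a pointwise supremum of linear functions. For claim (3) you take a genuinely different route. The paper gets continuity from the general fact that a finite convex function on the open set $\mathbb{R}^d$ is continuous, and then invokes Alexandrov's theorem to obtain twice differentiability almost everywhere, leaving the continuity of the first derivative off the null set somewhat implicit. You instead establish the sharper quantitative fact that $J$ is $1$-Lipschitz via $|J(\theta)-J(\theta')|\le\sup_{x\in\X}|x^\transp(\theta-\theta')|\le\|\theta-\theta'\|$ (using $\|x\|\le 1$), deduce almost-everywhere differentiability from Rademacher's theorem (or its convex-function counterpart), and then obtain continuity of $\nabla J$ on the differentiability set from upper semicontinuity of the subdifferential map combined with its collapse to a singleton at differentiability points (Rockafellar, Thm.~25.5). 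Your route buys two things: it is more elementary, avoiding Alexandrov's theorem, whose second-order conclusion is never actually used elsewhere in the paper; and it addresses the literal claim of the proposition --- continuity of the \emph{first} derivative off a zero-measure set --- more directly than the paper's appendix proof does. Your closing observation that at a differentiability point $\partial J(\theta)$ reduces to $\{x^\star(\theta)\}$ also anticipates Lemma~\ref{lem:gradient.optimal.arm}, which the paper proves separately via the subgradient inequality; both treatments are compatible.
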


These properties follow from the fact that $J$ is the \textit{support function} of $\X$ and it shows that $J$ is convex for any arm set $\X$. As a result, we can directly relate $R_t^\ts$ to the gradient of $J$ as
\begin{align*}
&R_t^\ts \leq \E\Big[ \sup_{\theta\in\calE_t^\ts} \nabla J(\wt\theta)^\transp (\wt\theta-\theta) \I\{\wh{E}_t\} \Big| \F_t, \wt\theta\in\Theta_t^{\opt}\Big]  \\
&\leq \E\Big[  \|\nabla J(\wt\theta)\|_{V_t^{-1}} \!\!\sup_{\theta\in\calE_t^\ts}\!\!\|\wt\theta\!-\!\theta\|_{V_t}  \Big| \F_t, \wt\theta\!\in\!\Theta_t^{\opt}, \wh{E}_t \Big] \mathbb{P} ( \wh{E}_t)\\
& \leq 2 \gamma_t(\delta^\prime)  \E\Big[  \|\nabla J(\wt\theta)\|_{V_t^{-1}} \Big| \F_t, \wt\theta\!\in\!\Theta_t^{\opt}, \wh{E}_t \Big] \mathbb{P} ( \wh{E}_t)
\end{align*}
%
where we use Cauchy-Schwarz,  we ``push'' the event $\wh{E}_t$ into the conditioning and we use the fact that $\wt\theta \in \calE^\ts_t$.\\

\textbf{Step 2 (From gradient of $J(\theta)$ to optimal arm $x^\star(\theta)$).}
In the sketch of the proof there was a direct relationship between $\nabla J(\theta)$ and the optimal arm corresponding to $\theta$ by direct construction. In the next lemma, we show that this connection is true for any arm set $\X$ (proof in App.~\ref{sec:app_support}).

\begin{lemma}\label{lem:gradient.optimal.arm}
Under Asm.~\ref{asm:arm.set}, for any $\theta \in\Re^d$, we have $ \nabla J (\theta)  = x^\star(\theta)$ except for a zero-measure set w.r.t.\ the Lebesgue's measure.
\end{lemma}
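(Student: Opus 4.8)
The plan is to exploit the fact, already recorded in Prop.~\ref{pr:J_support_function}, that $J$ is the support function of the arm set $\X$, equivalently of its closed convex hull $\mathcal{C}=\overline{\mathrm{conv}}(\X)$, which is compact and convex under Asm.~\ref{asm:arm.set}. The whole argument rests on two classical facts from convex geometry: (i) the subdifferential of a support function is exactly the exposed face it selects, i.e.\ $\partial J(\theta)=\arg\max_{x\in\mathcal{C}} x^\transp\theta$; and (ii) a finite convex function on $\mathbb{R}^d$ is differentiable outside a set of Lebesgue measure zero. Combining these, at every point of differentiability the maximizer must be unique and coincide with the gradient, and a short extreme-point argument transfers this maximizer from $\mathcal{C}$ back to $\X$ so that it equals $x^\star(\theta)$.

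More precisely, first I would record from Prop.~\ref{pr:J_support_function} that $J$ is real-valued and convex on all of $\mathbb{R}^d$; hence, by the standard differentiability theorem for finite convex functions, there is a Lebesgue-null set $N$ such that $J$ is differentiable on $\mathbb{R}^d\setminus N$. This $N$ is precisely the ``zero-measure set'' appearing in the statement. Fix any $\theta\notin N$. Since $J$ is differentiable at $\theta$, its subdifferential is the singleton $\partial J(\theta)=\{\nabla J(\theta)\}$. On the other hand, fact (i) gives $\partial J(\theta)=\{x\in\mathcal{C}:x^\transp\theta=J(\theta)\}$, the set of maximizers of the linear form $x\mapsto x^\transp\theta$ over $\mathcal{C}$. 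Equating the two shows that this maximizer is unique and equal to $\nabla J(\theta)$.

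It remains to pass from $\mathcal{C}$ to $\X$. A linear functional that attains a \emph{unique} maximum over a compact convex set must attain it at an extreme point; and every extreme point of $\mathcal{C}=\overline{\mathrm{conv}}(\X)$ lies in $\overline{\X}=\X$ (using that $\X$ is closed by Asm.~\ref{asm:arm.set}). Therefore $\nabla J(\theta)\in\X$ and it is the unique maximizer of $x^\transp\theta$ over $\X$ as well, which is by definition $x^\star(\theta)$. Hence $\nabla J(\theta)=x^\star(\theta)$ for all $\theta\notin N$, proving the claim.

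The main obstacle is establishing fact (i)---the identification of $\partial J$ with the exposed face---cleanly, together with the $\mathcal{C}$-versus-$\X$ bookkeeping, since $x^\star(\theta)$ is an argmax over $\X$ whereas the subdifferential naturally lives in the convex hull. An attractive self-contained alternative that sidesteps both the hull and the subdifferential identity is to compute the one-sided directional derivative directly via a Danskin-type argument: using compactness of $\X$ one shows $J'(\theta;u)=\max_{x\in M(\theta)}x^\transp u$, where $M(\theta)=\{x\in\X: x^\transp\theta=J(\theta)\}$. At a point of differentiability, $J'(\theta;u)=\nabla J(\theta)^\transp u$ is linear in $u$; testing with $u$ and $-u$ forces every $x\in M(\theta)$ to satisfy $x^\transp u=\nabla J(\theta)^\transp u$ for all $u$, hence $M(\theta)=\{\nabla J(\theta)\}$ and $x^\star(\theta)=\nabla J(\theta)$ directly on $\X$.
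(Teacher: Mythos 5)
Your argument is correct, but it is organized differently from the paper's, and the difference is worth noting because it affects how much machinery you need. The paper (Prop.~\ref{pr:gradient_support_function}) never passes to the convex hull and never invokes the full exposed-face identity $\partial J(\theta)=\arg\max_{x\in\mathcal{C}}x^\transp\theta$. It only proves the \emph{easy inclusion}: for any maximizer $x(\theta)\in\arg\max_{x\in\X}x^\transp\theta$ (which exists by compactness of $\X$, no convexity of $\X$ needed), one has $J(\bar\theta)\geq x(\theta)^\transp\bar\theta = J(\theta)+x(\theta)^\transp(\bar\theta-\theta)$ for all $\bar\theta$, i.e.\ $x(\theta)\in\partial J(\theta)$ directly. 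Combined with a.e.\ differentiability of the convex function $J$ (Alexandrov/Rademacher, as in your fact (ii)), the subdifferential is a singleton off a null set, so every maximizer over $\X$ equals $\nabla J(\theta)$ there, which simultaneously gives uniqueness of $x^\star(\theta)$ and the identity $\nabla J(\theta)=x^\star(\theta)$ without any $\mathcal{C}$-versus-$\X$ bookkeeping. Your main route instead establishes the full equality of $\partial J$ with the face of $\overline{\mathrm{conv}}(\X)$ and then descends to $\X$ via the observation that a unique maximizer is an extreme point together with Milman's theorem; this is all correct, but it is exactly the extra work you flag as the ``main obstacle,'' and it is avoidable. Your alternative Danskin-type argument is also valid and, like the paper's proof, works directly on $\X$; it is essentially a directional-derivative restatement of the same subgradient inclusion. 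In short: all three arguments share the a.e.-differentiability step, but the paper (and your Danskin variant) get away with a one-line inclusion where your primary route proves a stronger characterization than the lemma requires.
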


This property strongly connects the exploration of \ts to the actual regret. In fact, together with Prop.~\ref{p:self_normalized_determinant}, it implies that selecting the optimal arm associated with any optimistic $\theta$ is equivalent to reducing the gradient of $J$ and ultimately the regret $R^\ts_t$. This motivates the next step where we show that since \ts is often optimistic, then the arm $x_t = x^\star(\wt\theta_t)$ contributes to the reduction of the regret.\\ 

\textbf{Step 3 (Optimism).} 
The optimism of \ts is a direct consequence of the convexity of $J$ and the fact that the distribution of $\eta$ is oversampling by a factor $\sqrt{d}$ w.r.t.\ the ellipsoid $\calE_t^{\rls}$. Since this is at the core of the TS sampling analysis, we detail the proof here but postpone convexity results in App.~\ref{sec:app_proofs}. 

\begin{lemma}\label{le:probability_optimistic}
Let $\Theta_t^{\opt}\! :=\! \{ \theta\!\in\!\Re^d |  J(\theta)\! \geq\! J(\theta^\star)\! \} \cap \calE^\ts_t$ be the set of optimistic parameters, $\wt\theta_t = \wh\theta_t + \beta_t(\delta^\prime) V_t^{-1/2} \eta$ with $\eta \sim \distro$, then $\forall t\geq 1, \hspace{1mm}\mathbb{P}\big( \wt\theta_t \in \Theta_t^{\opt} |  \F_t, \wh{E}_t \big) \geq p/2.$
%
%
\end{lemma}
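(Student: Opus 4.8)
The plan is to verify the two defining requirements of $\Theta_t^\opt$ separately and then combine them: that $\wt\theta_t$ is optimistic, i.e.\ $J(\wt\theta_t) \geq J(\theta^\star)$, which I will derive from the anti-concentration property together with the convexity of $J$; and that $\wt\theta_t \in \calE^\ts_t$, which follows from the concentration property. A union bound then lower-bounds the probability of their intersection.

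First I would recast optimism as a half-space event on the perturbation $\eta$. Since $J$ is the support function of $\X$, the optimal arm $x^\star = x^\star(\theta^\star)$ is a subgradient of $J$ at $\theta^\star$ (in agreement with Lem.~\ref{lem:gradient.optimal.arm}), so convexity gives $J(\wt\theta_t) \geq J(\theta^\star) + x^{\star,\transp}(\wt\theta_t - \theta^\star)$. Substituting $\wt\theta_t = \wh\theta_t + \beta_t(\delta')V_t^{-1/2}\eta$ and splitting the inner product, on $\wh{E}_t$ the \rls term is controlled by Cauchy--Schwarz, $x^{\star,\transp}(\wh\theta_t - \theta^\star) \geq -\|x^\star\|_{V_t^{-1}}\beta_t(\delta')$, while the sampling term equals $\beta_t(\delta')\|x^\star\|_{V_t^{-1}}\,u^\transp\eta$ for the unit vector $u = V_t^{-1/2}x^\star/\|x^\star\|_{V_t^{-1}}$. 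The common factor $\beta_t(\delta')\|x^\star\|_{V_t^{-1}}$ cancels to yield $x^{\star,\transp}(\wt\theta_t - \theta^\star) \geq \beta_t(\delta')\|x^\star\|_{V_t^{-1}}(u^\transp\eta - 1)$, so on $\wh{E}_t$ the event $\{u^\transp\eta \geq 1\}$ already forces $\wt\theta_t$ to be optimistic.

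Since $\eta$ is drawn independently of $\F_t$ and $\wh{E}_t$ is $\F_t$-measurable, applying the anti-concentration property to $u$ gives $\mathbb{P}(J(\wt\theta_t) \geq J(\theta^\star) \mid \F_t, \wh{E}_t) \geq \mathbb{P}_{\eta\sim\distro}(u^\transp\eta \geq 1) \geq p$. Separately, $\wt\theta_t \in \calE^\ts_t$ is equivalent to $\|\eta\| \leq \gamma_t(\delta')/\beta_t(\delta') = \sqrt{c d \log(c'd/\delta')}$, which holds with probability at least $1-\delta'$ by the concentration property taken at level $\delta'$. A union bound over the two failure events then yields $\mathbb{P}(\wt\theta_t \in \Theta_t^\opt \mid \F_t, \wh{E}_t) \geq p - \delta' \geq p/2$, the final inequality using that $\delta' = \delta/(4T)$ is small compared to the fixed constant $p$.

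The hard part will be recasting optimism as the clean half-space event $\{u^\transp\eta \geq 1\}$ on which anti-concentration applies verbatim. This hinges on two points: justifying the first-order inequality for the possibly non-differentiable support function $J$ by invoking $x^\star(\theta^\star) \in \partial J(\theta^\star)$ rather than a genuine gradient, and the exact cancellation of the $\beta_t(\delta')\|x^\star\|_{V_t^{-1}}$ factors --- it is precisely this cancellation that shows the oversampling scale $\beta_t(\delta')$ in the definition of $\wt\theta_t$ produces a threshold equal to $1$ that is uniform over all directions $u$, exactly matching the anti-concentration condition. The remaining steps (independence of $\eta$ from $\F_t$ and the union bound) are routine.
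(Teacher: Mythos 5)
Your proof is correct, but it takes a genuinely different route from the paper's. The paper never linearizes $J$ at $\theta^\star$: it replaces $J(\theta^\star)$ by the (larger) quantity $\sup_{\theta\in\calE_t^\rls}J(\theta)$, passes to $\eta$-coordinates via $f_t(\eta)=J(\wh\theta_t+\beta_t(\delta')V_t^{-1/2}\eta)$, locates the maximizer $\wb\eta_t$ of $f_t$ on the boundary of the unit ball (Prop.~\ref{pr:maximum_convex_closed_set}), and uses a separate convexity lemma (Prop.~\ref{pr:increasing_subspace}) to show that $f_t\geq f_t(\wb\eta_t)$ on the whole half-space beyond the tangent hyperplane at $\wb\eta_t$; anti-concentration is then applied to that half-space, whose normal is $\wb\eta_t$ itself. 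You instead apply the subgradient inequality at $\theta^\star$ with $x^\star(\theta^\star)\in\partial J(\theta^\star)$, absorb the \rls error via Cauchy--Schwarz on $\wh{E}_t$, and obtain the explicit half-space $\{u^\transp\eta\geq 1\}$ with $u=V_t^{-1/2}x^\star/\|x^\star\|_{V_t^{-1}}$ --- a different half-space in general, but one tangent to the unit ball as required. Your version is more elementary (it bypasses the two geometric propositions of App.~\ref{sec:app_convex} entirely, needing only the subgradient identification already proved for Lem.~\ref{lem:gradient.optimal.arm}) and makes transparent why the oversampling scale $\beta_t(\delta')$ yields the threshold $1$ matching Def.~\ref{def:ts.exploration}; it is essentially the mechanism used by Agrawal and Goyal. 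The paper's argument buys a statement intrinsic to the convexity of $J$ alone (it bounds the probability of the stronger event $J(\wt\theta)\geq\sup_{\calE_t^\rls}J$ without reference to the optimal arm at $\theta^\star$), which is the form reused in the regularized and GLM extensions. Both handle the final intersection with $\calE^\ts_t$ identically, via $\mathbb{P}(A\cap B)\geq\mathbb{P}(A)-\mathbb{P}(B^c)\geq p-\delta'\geq p/2$ under the same mild condition on $T$. The only cosmetic caveat in your write-up is the degenerate case $x^\star(\theta^\star)=0$, where $u$ is undefined but every $\theta$ is trivially optimistic.
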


\begin{proof}
We need to study the probability that a $\wt\theta$ drawn at time $t$ from the \ts sampling distribution is optimistic, i.e., $J(\wt\theta) \geq J(\theta^\star)$, under event $\wh{E}_t$. More formally let
\begin{align*}
p_t = \Prob\big( J(\wt\theta) \geq J(\theta^\star) | \F_t, \wh{E}_t \big).
\end{align*}
Using the definition of $\wh{E}_t$ we have that $\theta^\star \in \calE_t^{\rls}$ (i.e., the true parameter vector belongs to the \rls ellipsoid) and then we can replace $J(\theta^\star)$ by the supremum over the ellipsoid as
\begin{align*}
p_t \geq \Prob\Big( J(\wt\theta) \geq \sup_{\theta\in\calE_t^\rls} J(\theta) \Big| \F_t, \wh{E}_t \Big).
\end{align*}
By recalling the definition of the \ts sampling process, we can write $\wt\theta = \wh\theta_t + \beta_t(\delta^\prime) V_t^{-1/2} \eta$, where $\eta\sim\distro$ and for notational convenience, we define the function $f_t(\eta) = J(\wh\theta_t + \beta_t(\delta^\prime) V_t^{-1/2} \eta)$. Let $\wb\theta_t = \arg\max_{\theta\in\calE_t^\rls} J(\theta)$ and $\wb\eta_t$ be the corresponding $\eta$ (i.e., $\wb\eta_t$ is such that $\wb\theta_t = \wh\theta_t + \beta_t(\delta^\prime) V_t^{-1/2} \wb\eta_t$). Since the supremum is taken within $\calE_t^\rls$, $\wb\eta_t$ belongs to the unit ball (i.e., $\wb\eta_t \in\ball_{d}(0, 1)$). As a result, we can rewrite the previous expression as
\begin{align*}
p_t \geq \Prob\Big( f_t(\eta) \geq f_t(\wb\eta_t) \Big| \F_t, \wh{E}_t \Big).
\end{align*}
Since the function $f_t$ inherits all the properties of $J$, notably its convexity in $\eta$, we know that the supremum on a convex closed set is reached at least at one point $\bar{\eta}_t$  and that it belongs to the boundary (see Prop.~\ref{pr:maximum_convex_closed_set}), which in our case corresponds to $\|\wb{\eta}_t \| = 1$. Moreover, let $\mathcal{H}_t(\wb{\eta}_t)$ be the hyperplane tangent to $\wb{\eta}_t$. $\mathcal{H}_t(\bar{\eta}_t)$ splits $\mathbb{R}^{d}$ in two complementary subsets $\mathcal{G}_t$ and $\mathcal{G}_t^\perp$ where $\mathcal{G}_t$ does not contain the unit ball by convention. Formally, one has:
\begin{equation*}
\begin{aligned}
\mathcal{H}_t(\wb{\eta}_t) &:= \{ \eta \in \mathbb{R}^d \text{ s.t. } \eta^\transp \bar{\eta}_t = 1 \}, \quad
\mathcal{G}_t := \{ \eta \in \mathbb{R}^d \text{ s.t. } \eta^\transp \bar{\eta}_t \geq 1 \}.
\end{aligned}
\end{equation*}

Again, the convexity of $f_t$ ensures that $f_t(\eta) \geq  f_t(\bar{\eta}_t)$ for all $\eta \in \mathcal{G}_t$ as proved in Prop.~\ref{pr:increasing_subspace}. As illustrated in Fig.~\ref{fig:optimism.eta} the probability of being optimistic is now reduced to the probability that $\eta$ drawn from $\distro$ falls into $\G_t$, which corresponds to
\begin{align*}
p_t \geq \Prob\Big( \eta\in\G_t \Big| \F_t, \wh{E}_t \Big) = \Prob\Big( \eta^\transp \bar{\eta}_t \geq 1 \Big| \F_t, \wh{E}_t \Big).
\end{align*}
Notice that $\bar{\eta}_t$ is entirely defined by the filtration $\F_t$ and the event $\wh{E}_t$ and it is thus independent from $\wb\eta_t$. As a result, we obtain from property 1 of Def.~\ref{def:ts.exploration} of the \ts sampling distribution,  that
\begin{align*}
\Prob\Big( \eta^\transp \bar{\eta}_t \geq 1 \Big| \F_t \Big) \geq p.
\end{align*}
\begin{figure}[h]
\begin{center}
\includegraphics[width=0.8\textwidth]{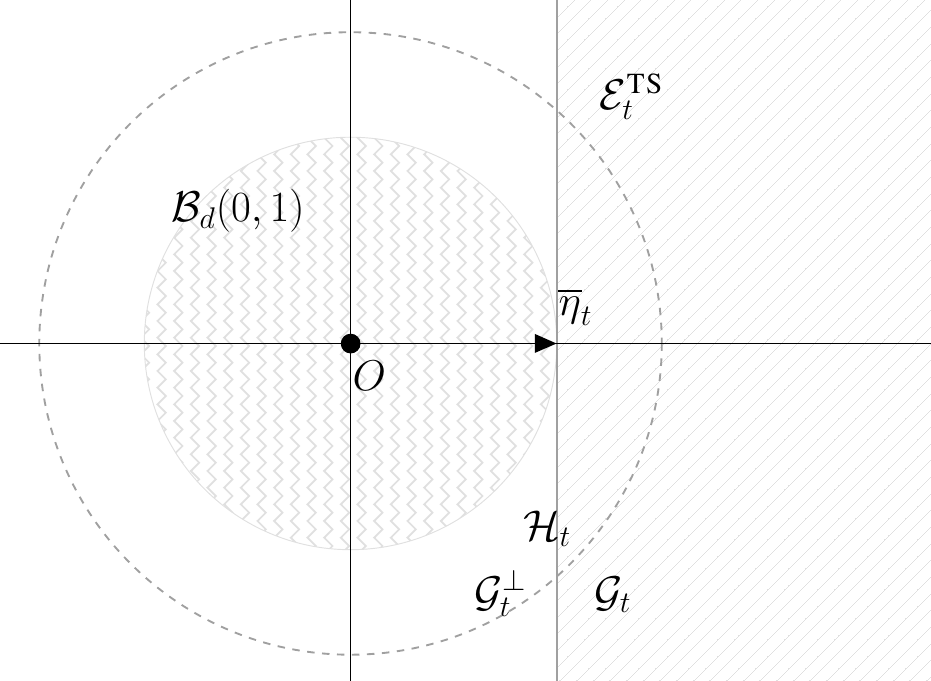}
\end{center}
\caption{Illustration of the probability of selecting an optimistic $\wt\theta_t$.}
\label{fig:optimism.eta}
\end{figure} 
Finally, we show that this property is not affected, up to a second order term, by the high-probability concentration event. It relies on the fact that the chosen confidence level $\delta^\prime = \delta / 4T$ is small compared to the anti-concentration probability $p$ of Def.~\ref{def:ts.exploration}. For sake of simplicity, we assume that $T \geq 1/2p$ which implies that $\delta^\prime \leq p/2$.\\
For any events $A$ and $B$, one has
\begin{align*}
\mathbb{P} (A \cap B) = 1  - \mathbb{P}(A^c \cup B^c) 
\geq \mathbb{P}(A) - \mathbb{P}(B^c)
\end{align*}
Applying the previous inequality to $A := \{J(\wt\theta) \geq J(\theta^\star) \}$ and $B := \{\wt\theta \in \calE^\ts_t \}$ where $\calE^\ts_t = \{ \theta \in \mathbb{R}^d \hspace{1mm} | \hspace{1mm} \|\theta - \wh\theta_t\|_{V_t} \leq \gamma_t(\delta^\prime) \}$ leads to
\begin{align*}
\mathbb{P} ( \wt\theta_t \in \Theta^\opt \cap \calE^\ts_t | \mathcal{F}_t, \hat{E}_t ) \geq p - \delta^\prime \geq p/2
\end{align*}

\end{proof}

We now make use of the fact that the probability of being optimistic is constant to control the regret.  Let $g(\wt\theta_t)$ be an arbitrary non-negative function of $\wt\theta_t$, then we can write the full expectation as
\begin{align*}
\E[g(\wt\theta_t) | \F_t, \wh{E}_t ] &\geq \E\big[g(\wt\theta_t) | \wt\theta_t \in \Theta_t^{\opt}, \F_t, \wh{E}_t \big] \mathbb{P}\big( \wt\theta_t \in \Theta_t^{\opt} \big) \\
&\geq  \E\big[g(\wt\theta_t) | \wt\theta_t \in  \Theta_t^{\opt}, \F_t, \wh{E}_t \big] p/2.
\end{align*}

Setting $g(\wt\theta) = 2 \gamma_t(\delta^\prime) \|x^\star(\wt\theta)\|_{V_t^{-1}}$ one obtains:
\begin{equation*}
\begin{aligned}
R_t^\ts &\leq 4 \gamma_t(\delta^\prime)/p \hspace{1mm}  \E \big[ \|x^\star(\wt\theta)\|_{V_t^{-1}}  | \F_t , \wh{E}_t \big] \mathbb{P}(\wh{E}_t),\\
&\leq 4 \gamma_t(\delta^\prime)/p \hspace{1mm} \E \big[ \|x^\star(\wt\theta)\|_{V_t^{-1}} \I\{\wh{E}_t\} | \F_t \big],
\end{aligned}
\end{equation*}
where we reintegrate the event $\wh{E}_t$ in the expectation in the last inequality. $2/p$ can be interpreted as the expected time between any two optimistic samples. Finally, we can use Azuma's inequality to obtain the final bound with probability at least $1-\delta/2$
\begin{align*}
R^\ts(T) &\leq \frac{4 \gamma_T(\delta^\prime)}{p} \Big( \sum_{t=1}^T \|x_t\|_{V_t^{-1}} + \sqrt{\frac{8 T } {\lambda} \log \frac{4}{\delta} }\Big),
\end{align*}
where $x_t$ is the optimal arm $x^\star(\wt\theta_t)$ selected by \ts. The proof is concluded using Cauchy-Schwarz and Prop.~\ref{p:self_normalized_determinant} to bound $R^\ts(T)$ and Prop.~\ref{p:concentration} to bound $R^\rls(T)$.



\section{Extensions}\label{s:extensions}

We provide an alternative proof for \ts in LB, leveraging the properties of the optimal value function, the properties of the \rls estimate and the \textit{concentration/anti-concentration} property of the sampling. As the functioning of the proof does not rely on the \textit{specific} shape of the $J$ function, we can readily apply it to similar, yet, more general linear problems. We present here two extensions: the regularized linear optimization problem and the generalized linear bandit.\\

\textbf{Regularized linear optimization.} Our proof holds for any arm set $\X$ and the corresponding constrained optimization problem $\max_{x\in\X} x^\transp \theta^\star$. Similarly, we can apply it to any regularized linear optimization problem $\max_{x\in\Re^d} f_{\mu,c}(x; \theta)$, with $f_{\mu,c}(x; \theta) = x^\transp \theta+ \mu c(x)$, where $\mu$ is a constant and $c(x)$ is an arbitrary penalty function of $x$ (e.g., norm-regularization). 
While there always exists a set of constraints (corresponding to a set of arms $\X_{c,\mu,\theta}$) such that the solution to the constrained and regularized problems coincides, such mapping is often unknown (e.g., $c(x) = \|x\|_1$) and thus \ts cannot be run on $\X_{c,\mu,\theta}$ but we need to directly deal with the regularized problem (i.e., sampling $\wt\theta_t$ and pulling arm $x_t = \arg\max_{x}f_{\mu,c}(x; \wt\theta_t)$). In this case, it can be seen that the three main steps of our proof still hold. In fact (see App.~\ref{sec:app_regularized_linear_optimization}), \textbf{1)} $J(\theta)$ is convex, \textbf{2)} the gradient of $J(\theta)$ corresponds to the optimal arm $x^*(\theta)$, \textbf{3)} Lemma~\ref{le:probability_optimistic} holds unchanged since it relies on the convexity of $J(\theta)$ and the \ts distribution $\D^{\ts}$ is the same. As a result, the regret bound follows. On the other hand, the original proof by~\citet{agrawal2012thompson} could be less readily applied to this case. First notice that the mapping from $\mu$ and $c(x)$ to the constrained set $\X_{c,\mu,\theta^\star}$ requires the unknown parameter $\theta^\star$. This means that if we pass from the regularized problem to the constrained problem at each time step $t$, we would be working on a set  $\X_{c,\mu,\wt\theta_t}$ which keeps changing over time. While \citet{agrawal2012thompson} study the contextual bandit problem where $\X_t$ changes arbitrarily over time, in this case $\X_t$ would change in response to $\wt\theta_t$ itself (i.e., it would not be available in advance) and the analysis would bound the per-step regret $r_t = \max_{x\in\X_{c,\mu,\wt\theta_t}} x^\transp \theta^\star - x_t^\transp \theta$, which does not correspond to the desired regret on $f_{\mu,c}$ (the true optimal arm $x^\star(\theta^\star)$ may not even be in $\X_{c,\mu,\wt\theta_t}$). Alternatively, we need to formulate a suitable definition of saturated and unsaturated arms for $f_{\mu,c}(x; \theta)$, which does not seem trivial and it may require developing a more \textit{ad-hoc} analysis. \\


\textbf{Generalized linear bandit.} Another interesting extension is the generalized linear bandit (GLM) problem of~\cite{filippi2010parametric}. In this setting, the reward associated to arm $x \in \mathcal{X}$ is no longer drawn from the linear regression model but is generated as  $r(x) = \mu(x^\transp \theta^\star) + \xi$, where $\mu$ is the so-called \textit{link function},
 $\theta^\star\in\Re^d$ is a fixed but unknown parameter vector and $\xi$ is a random zero-mean noise. One of the major advantage of this setting is that it encompasses \textit{logistic regression}. It can model the case when the reward is in $[0,1]$ and thus became very popular in recommender system where the reward represents the probability of click.\\
 Formally, the value of an arm $x\in\X$ is evaluated according to its expected reward $\mu(x^\transp\theta^\star)$ and for any parameter $\theta\in\Re^d$ we denote the optimal arm and its optimal value as
\begin{equation}\label{eq:optimal_arm_definition.glm}
x^{\star}(\theta) = \arg \max_{x \in \X} \mu(x^\transp \theta), \quad\quad J^{\text{GLM}}(\theta) = \sup_{x \in \mathcal{X}} \mu(x^\transp \theta).
\end{equation}
Then $x^\star = x^\star(\theta^\star)$ is the optimal arm associated with the true parameter $\theta^\star$ and $J^{GLM}(\theta^\star)$ its optimal value. 
At each step $t$, a learner chooses an arm $x_t \in \X$ using all the information observed so far (i.e., sequence of arms and rewards) but without knowing $\theta^\star$ and $x^\star$. At step $t$, the learner suffers an \textit{instantaneous regret} corresponding to the difference between the expected rewards of the optimal arm $x^\star$ and the arm $x_t$ played at time $t$. The objective of the learner is to minimize the \textit{cumulative regret} up to a terminal step $T$,
\begin{equation}\label{eq:cumulative_regret_definition.glm}
R^{\text{GLM}}(T) = \sum_{t=1}^T \big( \mu(x^{\star,\transp} \theta^\star ) - \mu(x_t^\transp \theta^\star)\big).
\end{equation}
Similarly to the regularized optimization problem, a regret bound can be derived for the GLM problem using the same line of proof that we use for LB. It first relies on the fact that, under suitable assumptions about the link function $\mu$, consistent estimates are available for $\theta^\star$ together with high probability confidence ellipsoids. Then, we can show (see App.~\ref{sec:app_generalized_linear_bandit}) that the GLM optimal value function $J^\text{GLM}(\theta)$ is related to the LB optimal value function $J(\theta)$ as $J^\text{GLM}(\theta) = \mu(J(\theta))$. Finally, by assumption the link function $\mu$ is Lipschitz (with constant $L$) and its first order derivative is lower bounded $\mu^\prime(\theta) \geq c$. Therefore, one can bound the regret as $R^\ts_t = J^\text{GLM}(\theta^\star) - J^{\text{GLM}}(\wt\theta_t) \leq \max(c,L) \big( J(\theta^\star) - J(\wt\theta_t) \big)$ and apply directly the line of proof of LB. We provide the proof in App.~\ref{sec:app_generalized_linear_bandit}.\\

\textbf{Other extensions.} To go further, we can generalize our proof to the other convex optimization problems $\max_{x\in\X} f(x,\theta)$, with linear observations (i.e., $y = x^\transp \theta + \xi$). If $f(x,\theta)$ is convex in $\theta$, then $J(\theta)$ is convex as well, thus enabling the possibility to apply our line of proof. More precisely, the gradient of $J$ to the arms played by \ts should be related (step 2, Lem.~\ref{lem:gradient.optimal.arm}) and the on-policy prediction error $R^{\rls}$ measured w.r.t.\ $f$ should be bounded (Prop.~\ref{p:concentration}). Whenever these properties are satisfied, the regret result follows. Notice that while the original proof by~\citet{agrawal2012thompson} may be extended to cover some of these problems, its requirements are slightly stronger. In fact, the definition of saturated and unsaturated arms relies on the fact that $f(x,\wh\theta_n)$ concentrates to $f(x,\theta)$ for any $x$, while in our case, we only need to bound $R^{\rls}$, which corresponds to an \textit{on-policy} error, where prediction errors are measured \textit{on} the specific arms selected by the algorithm. While this advantage may appear abstract, let consider the reinforcement learning case, where $f(x,\theta)$ is the value function of a policy $x$ in an environment $\theta$. In this case, $f(x,\theta^\star)$ may actually be unbounded for some $x$ (i.e., the policy $x$ does not control the system) and the definition of saturated/unsaturated arms could not be easily adjusted. This suggests that our proof could enable covering special RL cases as well.
Finally, we remark that defining \ts as a randomized algorithm and using convex geometry arguments in its analysis bears a strong resemblance with follow-the-pertubed-leader algorithm and  its regret analysis in adversarial linear bandit~\citep{abernethy2015fighting}, suggesting that the two approaches may be strongly related.


\section{Discussion}\label{s:discussion}

We developed an alternative proof for \ts in LB with novel insights on the core elements of the algorithm (\textit{optimism}) and the structure of the problem (\textit{support function} $J(\theta)$). There are a number of possible applications of our results and future directions of investigation. The main open question is whether or not oversampling is needed to guarantee a $\sqrt{T}$ regret bound for TS. Since this worsen the bound by $\sqrt{d}$, answering this question could improve the current frequentist bound from $\tilde{O} \big( d^{3/2} \sqrt{T} \big )$ to $\tilde{O} \big( d \sqrt{T} \big)$, thus matching the bound achieved by OFUL. We first present numerical experiments that compare the two versions of \ts and 
exhibit the dependency of the constant w.r.t $d$. Then, we stress why oversampling is needed in the current analysis and discuss how to relax it.\\

\textbf{Numerical experiments.} 
To understand the impact of the oversampling, we compare two instances of the \ts algorithm:
\begin{enumerate}
\item we denote as $FreqTS$ the instance of the algorithm where, at each time step, the parameter is sampled as $\wt\theta_t = \wh\theta_t + \beta_t V_t^{-1/2} \eta$ with $\eta \sim \mathcal{N}(0,I)$, for which we prove a $\tilde{O} \big( d^{3/2} \sqrt{T}\big)$ regret bound,
\item we denote as $BayesTS$ its bayesian counterpart where, at each time step, the parameter is sampled as $\wt\theta_t = \wh\theta_t + V_t^{-1/2} \eta$ with $\eta \sim \mathcal{N}(0,I)$, for which no frequentist regret guarantee exists.
\end{enumerate}
We compute the regret over trajectories of length $T=200000$ for values of $d$ spanning $[0,30]$. The motivation for such long trajectories is that the regret curves exhibit slightly different regimes (w.r.t. $t$). Since we focus on the dependency on $d$, we discard this effect ensuring that each trajectory reaches the asymptotic regime. The parameter $\theta^\star$ is fixed at the beginning of each trajectory as $\theta^\star = (1,0,\dots,0)$. The reason for imposing $\|\theta^\star\| =1$ is to remove the dependency on $d$ in the norm of $\theta^\star$, which affects the regret through the constant $S$ of Asm.~\ref{asm:param.set}. The \rls estimation is initialized as $V_0 = \lambda I$ with $\lambda =1$ and $\wh\theta_0$ is randomly chosen on the unite sphere. Finally, the reward noise sequences $\{ \xi_{t} \}_t$ are generated  i.i.d according to $\xi_t \sim \mathcal{N}(0,I)$.

We present the result on Fig.\ref{fig:TS.comparison}. On the l.h.s, we draw the average and high probability cumulative regret: if the $FreqTS$ algorithm exhibits a $\sqrt{T}$ regret as expected from the theoretical guarantee, the $BayesTS$ algorithm offers even better performance. While the $\sqrt{T}$ shape is preserved, the dependency of the constant in the dimension $d$ is better, as illustrated on the figure on the r.h.s: we draw the final regret $R(T)$ for both algorithms, as a function of the dimensionality $d$ of the problem: again, as expected from the theoretical guarantee, the $FreqTS$ regret scales as $d^{3/2}$, whereas the $BayesTS$ regret scales as $d$, thus matching the bound of OFUL.\\

\begin{figure}[!h]
\vspace{0.4in}
\centering
\setlength{\unitlength}{\textwidth}
\begin{picture}(1,0.5)
\put(-.17,0){\includegraphics[width=1.3\textwidth]{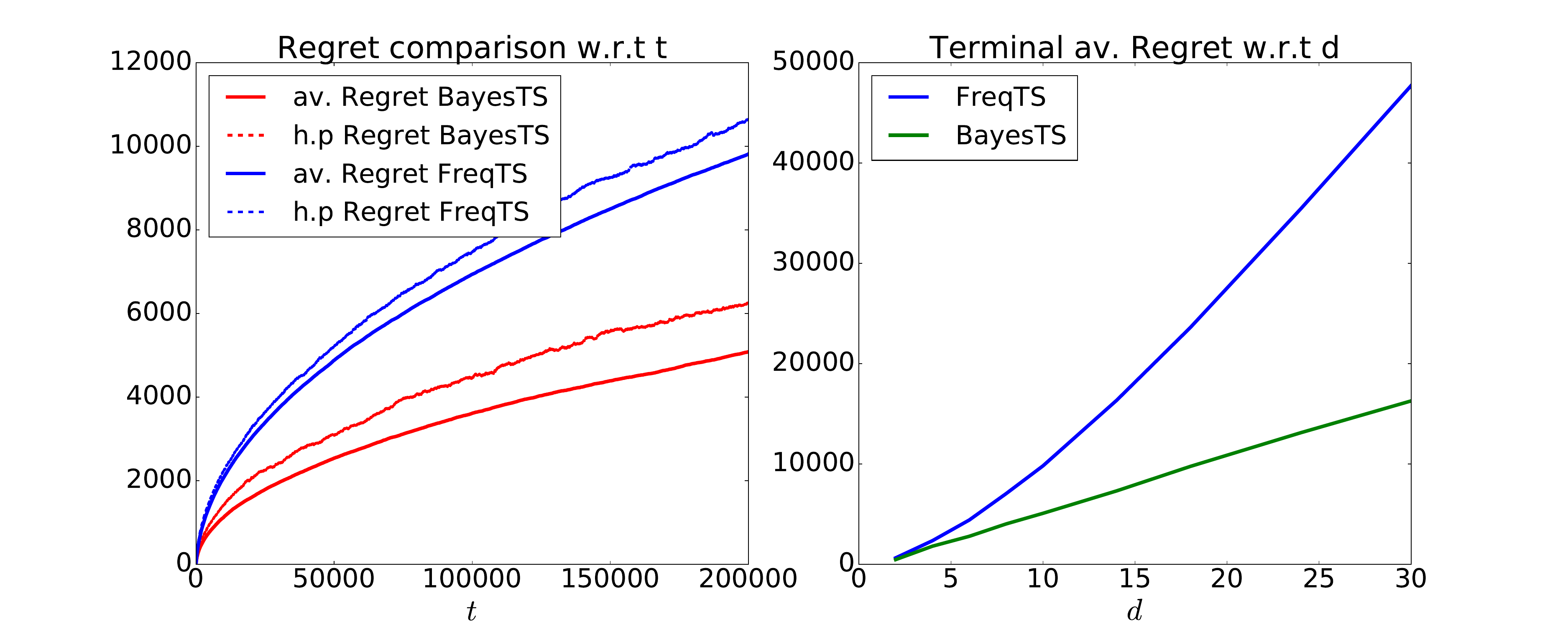}}
\end{picture}
\vspace{-0.3in}
\caption{\small Regret comparison between $BayesTS$ and $FreqTS$ algorithm. \textit{Left:} empirical average and high probability regret w.r.t $t$ for $100$ trajectories with $d=10$. \textit{Right:} empirical average terminal regret $R(T)$ for $T=200000$ w.r.t $d$ for $100$ trajectories.}
\label{fig:TS.comparison}
\vspace{-0.1in}
\end{figure}

The intuition provided by this experiment is twofold: first, it stresses that the $\tilde{O} \big( d^{3/2} \sqrt{T}\big)$ regret bound of the $FreqTS$ algorithm is tight, so a factor $\sqrt{d}$ cannot be removed by a different analysis; secondly, it suggests that no oversampling is needed to guarantee a $\sqrt{T}$ regret and that a $\tilde{O} \big( d \sqrt{T}\big)$ regret bound could be derived for the $BayesTS$ algorithm. \\

\textbf{About optimism and oversampling.} 
As illustrated in Sect.~\ref{sec:geometry}, in the current proof optimistic steps allows to bound the regret of non-optimistic steps. Nonetheless, it can be shown that some non-optimistic steps (even very \textit{pessimistic}!) may indeed be as ``informative'' as optimistic steps and allow reducing the regret as well. Let consider a minor change in the line of proof, anticipating the use of the convexity of $J$, i.e.,
\begin{align*}
R_t^{\ts} &\leq \sup_{\theta\in\calE_t^\ts} \nabla J(\theta^\star)^\transp (\theta^\star-\theta)\I\{E_t\} \\
&\leq  \|\nabla J(\theta^\star)\|_{V_t^{-1}}2 \gamma_t(\delta')\I\{E_t\}.
\end{align*}
If we sample a $\wt\theta$ such that the gradient at it $\nabla J(\wt\theta)$ (i.e., which coincides with the corresponding optimal action $x^\star(\wt\theta)$) has the same $V_{t}^{-1}$-norm as $\nabla J(\theta^\star)$, then we could apply the same reasoning as in the original sketch of the proof and bound the regret of any subsequent step. More formally, we can define the set $\Theta_t^{\grad} = \{ \theta: \|\nabla J(\theta)\|_{V_t^{-1}} \geq \|\nabla J(\theta^\star)\|_{V_t^{-1}} \}$ of parameters that have larger gradient than $\theta^\star$'s. Similar to $\Theta^{\opt}$, if the probability of sampling $\wt\theta$ in $\Theta_t^{\grad}$ is lower-bounded by a constant $p'$, then the proof can be reproduced with exactly the same arguments and result. Even further, we could relax the requirement and define $\Theta_t^{\grad}(\alpha) = \{ \theta: \|\nabla J(\theta)\|_{V_t^{-1}} \geq \alpha\|\nabla J(\theta^\star)\|_{V_t^{-1}} \}$, with $\alpha<1$, which would allow even a bigger probability at the cost of an extra constant factor $\alpha$ in the final regret. 
\begin{figure}[h]
\begin{center}
\includegraphics[width=0.8\textwidth]{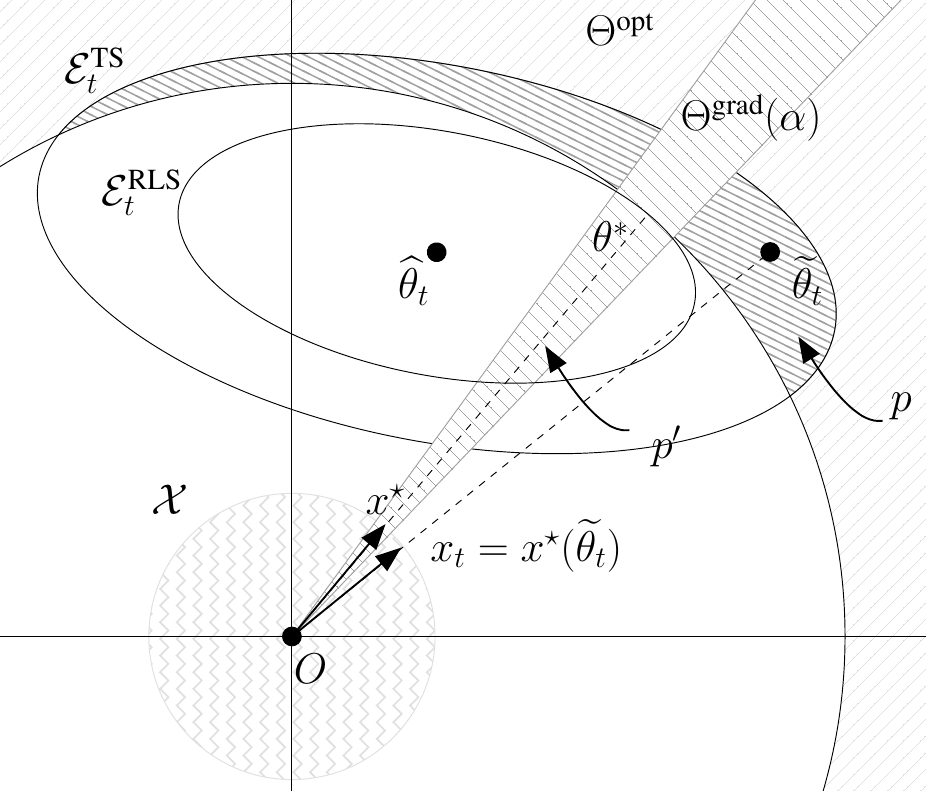}
\end{center}
\vspace{-0.1in}
\caption{\small Illustration of the non-optimistic region that could contribute to reduce the regret.}
\label{fig:illustration2}
\vspace{-0.1in}
\end{figure}
As illustrated in Fig.~\ref{fig:illustration2}, in the case $\X = B_d(0,1)$, $\Theta_t^{\grad}(\alpha)$ corresponds to a cone whose overlap with $\calE^{\ts}$ may actually be even larger than for $\Theta^{\opt}$. This illustration shows that the set of \textit{useful} explorative actions does not necessarily coincide with the set of optimistic parameters and that many more parameters in $\calE^{\ts}$ may contribute to reduce the regret. This may explain the empirical success of \ts and it may suggest that the oversampling by a factor $\sqrt{d}$ to ensure optimism may be a too strong requirement. Finally, we remark that a similar optimistic argument is employed by~\citet{agrawal2013further} in MAB. Nonetheless, in Lemma~2 they prove that the probability of being optimistic increases over time. This may suggest that $\calE^{\ts}$ needs to be only a \textit{constant} fraction bigger than $\calE^{\rls}$, since the initial small probability of being optimistic would tend to a constant (or even to 1) later on during the learning process. Whether this argument holds and how to prove it remains an open question.\\

{\small
\textbf{Acknowledgement} This research is supported in part by a grant from CPER Nord-Pas de Calais/FEDER DATA Advanced data science and technologies 2015-2020, CRIStAL (Centre de Recherche en Informatique et Automatique de Lille), and the French National Research Agency (ANR) under project ExTra-Learn n.ANR-14-CE24-0010-01.
}

\newpage
\begin{small}
\bibliography{biblio_bandit}
\bibliographystyle{plainnat}
\end{small}

\newpage
\onecolumn
\appendix


\section{Examples of \ts distributions}\label{sec:app_examples}

\textit{Example 1: Uniform distribution $\eta \sim \mathcal{U}_{B_d(0,\sqrt{d})}$.}
The uniform distribution satisfies the concentration property with constants $c = 1$ and $c^\prime = \frac{e}{d}$ by definition. Since the set $\{ \eta | u^\transp \eta \geq 1\} \cap B_d(0,\sqrt{d})$ is an hyper-spherical cap for any direction $u$ of $\mathbb{R}^d$, the the anti-concentration property is satisfied provided that the ratio between the volume of an hyper-spherical cap of height $\sqrt{d} -1$ and the volume of the ball of radius $\sqrt{d}$ is constant (i.e., independent from $d$). Using standard geometric results (see Prop.~\ref{p:volume}), one has that for any vector $\|u\| = 1$
\begin{equation}
\mathbb{P} ( u^\transp \eta \geq 1) = \frac{1}{2} I_{1-\frac{1}{d}} \Big(\frac{d+1}{2},\frac{1}{2}\Big),
\label{eq:hyperspherical_cap_beta_function}
\end{equation}
where $I_x(a,b)$ is the incomplete regularized beta function. In Prop.~\ref{p:hyperspherical.cap} we prove that 
\begin{equation*}
I_{1-\frac{1}{d}} \Big(\frac{d+1}{2},\frac{1}{2}\Big) \geq \frac{1}{8 \sqrt{3 \pi}},
\end{equation*}
and hence we obtain $p = \frac{1}{16 \sqrt{3 \pi}}$.{\hfill $\qed$}.

\textit{Example 2: Gaussian case $\eta \sim \mathcal{N}(0,I_d)$.}
The concentration property comes directly from the Chernoff bound for standard Gaussian random variable together with union bound argument. For any $\alpha > 0$, we have
\begin{equation*}
\mathbb{P}( \| \eta \| \leq \alpha \sqrt{d} ) \geq \mathbb{P}(\forall 1 \leq i \leq d, \hspace{1mm} | \eta_i | \leq \alpha) \geq 1 - d \mathbb{P} (| \eta_i | \geq \alpha).
\end{equation*}
Standard concentration inequality for Gaussian random variable gives, $\forall \alpha > 0$, 
\begin{equation*}
\mathbb{P} (| \eta_i | \geq \alpha)  \leq 2 e^{ - \alpha^2/2}.
\end{equation*}
Plugging everything together with $\alpha = \sqrt{ 2 \log \frac{ 2 d}{\delta} }$ gives the desired result with $c = c^\prime = 2$. Let $\eta_i$ be the $i$-th component of $\eta$ for any $1 \leq i \leq d$. Then $\eta_i \sim \mathcal{N}(0,1)$. Since $\eta$ is rotationally invariant, for any direction $u$ of $\mathbb{R}^d$ and an appropriate choice of basis, we have $\mathbb{P}(u^\transp \eta \geq 1)  \geq \mathbb{P}( \eta_1 \geq 1)$. From standard Gaussian properties (see Thm 2 of~\cite{chang2011chernoff}) we have  
\begin{equation*}
\mathbb{P}( \eta_1 \geq 1) = \frac{1}{2} \text{erfc} \left( \frac{1}{\sqrt{2}} \right)  \geq \frac{1}{4 \sqrt{e \pi}}
\end{equation*}
which ensures the anti-concentration property with $p = \frac{1}{4 \sqrt{e \pi}}$.{\hfill $\qed$}


\section{Properties of convex function}\label{sec:app_convex}

\begin{proposition}
\label{pr:maximum_convex_closed_set}
Let $f: \mathbb{R}^{d} \rightarrow \mathbb{R}$ be a convex function and $C$ be a closed convex subset of $\mathbb{R}^d$. Then, on $C$, there exists a point on the boundary of $C$ that achieves the argmax.
\end{proposition}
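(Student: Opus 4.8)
The plan is to reduce the $d$-dimensional statement to an elementary one-dimensional fact about convex functions by restricting $f$ to lines, exploiting that a convex function of one variable on an interval attains its maximum at an endpoint of that interval. I would first assume that the supremum of $f$ over $C$ is attained (otherwise there is nothing to prove) at some $x_0\in C$ with value $M=f(x_0)$. If $x_0\in\partial C$ there is nothing to do, so the interesting case is $x_0\in\mathrm{int}(C)$, and the goal is to produce a boundary point at which $f$ also equals $M$. The fully degenerate case $C=\mathbb{R}^d$, where $\partial C=\emptyset$, should be excluded up front, since then the statement is vacuous and a convex function attaining a maximum on all of $\mathbb{R}^d$ is constant anyway.

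First I would fix a direction $u$ for which the ray $\{x_0+tu : t\ge 0\}$ leaves $C$: since $C\neq\mathbb{R}^d$ is closed and convex, I pick $z\notin C$ and set $u=z-x_0$, so that $x_0+u=z\notin C$. Then the set $I=\{t\in\mathbb{R} : x_0+tu\in C\}$ is a closed interval $[t^-,t^+]$ (closed because $C$ is closed, an interval because $C$ is convex) with $t^+$ finite and with $0$ in its interior (because $x_0\in\mathrm{int}(C)$). The map $g(t)=f(x_0+tu)$ is convex, being the restriction of a convex function to a line, and satisfies $g(0)=M=\max_{t\in I}g(t)$.

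The core of the argument is to show $g(t^+)=M$, so that $w:=x_0+t^+u$ is a maximizer; by definition of $t^+$ as a supremum together with closedness of $C$, this $w$ lies in $\partial C$, which completes the proof. I would split into two cases. If $t^-$ is finite, I write $0$ as a strict convex combination $0=\lambda t^-+(1-\lambda)t^+$ with $\lambda\in(0,1)$; convexity gives $M=g(0)\le\lambda g(t^-)+(1-\lambda)g(t^+)\le M$, and since both $g(t^\pm)\le M$, equality forces $g(t^-)=g(t^+)=M$. If instead $t^-=-\infty$, then $g$ is convex and bounded above by $M$ on $(-\infty,t^+]$; a convex function bounded above on a left-infinite interval must be non-decreasing (otherwise a negative slope propagates leftward and drives $g\to+\infty$), so $g(t^+)\ge g(0)=M$, hence $g(t^+)=M$.

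I expect the main obstacle to be exactly this handling of unbounded $C$: the clean two-sided convex-combination argument applies only when the line meets $C$ in a bounded segment, and for sets such as a half-space one line through $x_0$ may stay inside $C$ on one side forever, so one must instead invoke the monotonicity of a one-sidedly-bounded convex function. Everything else — convexity of the restriction to a line, the endpoint property, and the identification of $w$ as a genuine boundary point — should be routine.
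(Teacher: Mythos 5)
Your proof is correct. The paper argues by contradiction: assuming an interior maximizer $x^\star$ with $f(x^\star)$ strictly larger than $f$ on the whole boundary, it picks an arbitrary boundary point $x$, pushes slightly past $x^\star$ to $y = x^\star + \epsilon(x^\star - x) \in \mathrm{int}(C)$, and writes $x^\star$ as a strict convex combination of $x$ and $y$ to conclude $f(x^\star) < f(y)$, a contradiction. Your argument rests on the same single convexity inequality (an interior maximizer is a strict convex combination of two points of $C$, one of which can be taken on the boundary), but you run it directly rather than by contradiction, along one line through $x_0$, reducing to the one-dimensional endpoint property of convex functions; and you are noticeably more careful about the edge cases the paper glosses over. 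You explicitly guarantee that the chosen line actually exits $C$ by aiming at a point $z \notin C$ (which also disposes of $C = \mathbb{R}^d$, where $\partial C = \emptyset$ and the paper's choice of ``some $x \in \mathrm{bound}(C)$'' would be impossible), you flag that attainment of the supremum must be assumed, and you treat the half-infinite chord case $t^- = -\infty$ via the monotonicity of a convex function bounded above on a left-infinite interval --- exactly the case where a two-sided convex-combination step would not apply. The paper's version buys brevity; yours buys a complete argument for unbounded $C$ and an explicit boundary maximizer. For the paper's actual use of the proposition ($C$ the unit ball, hence compact with non-empty boundary), both arguments suffice.
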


\begin{proof}
Let's denote as $int(C)$ and $bound(C)$ the interior and the boundary of the closed convex set $C$ respectively. Assume that $\exists x^\star \in int(C)$ such that $f(x^\star) > f(x)$ for any $x \in bound(C)$ and  $f(x^\star) \geq f(y)$ for any $y \in int(C)$.\\

Then define $y = x^\star + \epsilon (x^\star - x)$ for some $x \in bound(C)$. By definition of the open set $int(C)$, $\exists \epsilon > 0 $ such that $y \in int(C)$. Moreover, $x^\star \in [ y , x]$ e.g. 
\begin{equation*}
x^\star = (1-t) x + t y , \hspace{5mm} t = \frac{1}{1+\epsilon} \in ]0,1[
\end{equation*}
Using the convexity of $f$, one has
\begin{equation*}
\begin{split}
f(x^\star) &\leq (1-t) f(x) + t f(y) < (1-t) f(x^\star) + t f(y) \\
f(x^\star) &< f(y)
\end{split}
\end{equation*}
which is impossible by assumption. 
\end{proof}

\begin{proposition}
\label{pr:increasing_subspace}
Let $f: \mathbb{R}^{d} \rightarrow \mathbb{R}$ be a convex function. Let $B_d(0,1)$ be the unit $d-$dimensional ball and $S_d(0,1)$ the associated unit sphere.\\
Let $x^\star \in S_d(0,1)$ such that $f(x^\star) \geq f(x)$ for all $x \in B_d(0,1)$, and let $\mathcal{H}(x^\star)$ be the hyperplan tangent to $B_d(0,1)$ at the point $x^\star$, which splits $\mathbb{R}^d$ into two complementary subsets $\mathcal{G}(x^\star)$ and $\mathcal{G}^\perp(x^\star)$ defined respectively by
\begin{equation*}
\begin{aligned}
\mathcal{H}(x^\star)&= \{ x \in \mathbb{R}^d \text{ s.t. } x^\transp x^\star = 1 \},\\
 \mathcal{G}(x^\star) &= \{ x \in \mathbb{R}^d \text{ s.t. } x^\transp x^\star \geq 1 \}, \\
  \mathcal{G}(x^\star)^\perp &= \{ x \in \mathbb{R}^d \text{ s.t. } x^\transp x^\star < 1 \}.
 \end{aligned}
\end{equation*}
Then, $\forall y \in \mathcal{G}(x^\star), \hspace{3mm} f(y) \geq f(x^\star)$.
\end{proposition}

\begin{proof}
We first notice that from Proposition~\ref{pr:maximum_convex_closed_set} $x^\star$ is well defined since the maximum is reached on the boundary. The associated subset $\mathcal{G}(x^\star)$ is then
\begin{equation*}
\mathcal{G}(x^\star) := \{ y = x^\star + u , u \in \mathbb{R}^d \hspace{2mm} | \hspace{2mm} u^\transp x^\star \geq 0 \}.
\end{equation*}
We want to show that $f(y) \geq f(x^\star)$ for any $y \in \mathcal{G}(x^\star)$. We introduce the increasing sequence of subsets
\begin{equation*}
\mathcal{G}_n = \left\{ y = x^\star + u, u \in \mathbb{R}^d \hspace{2mm} | \hspace{2mm} u^\transp x^\star \geq \frac{\|u\|}{2(n-1)} \right \}, \hspace{3mm} n \geq 2.
\end{equation*}
For any $y = x^\star + u$ in $\mathcal{G}_n$, we associate  
\begin{equation*}
\begin{split}
x &= x^\star - \frac{1}{2 (n-1)} \frac{u}{\|u\|}.
\end{split}
\end{equation*}
By definition of $y$ (and hence $u$), we have
\begin{equation*}
\begin{split}
\|x\|^2 &= 1 + \frac{1}{2(n-1)}^2 - \frac{1}{2(n-1) \|u\|} u^\transp x^\star \\ 
&= 1 + \frac{1}{2(n-1)} \left[ \frac{1}{2(n-1)} - \frac{u^\transp}{\|u\|}x^\star \right] \\
&\leq 1,
\end{split}
\end{equation*}
which means that $x \in \ball_d(0,1)$.
Moreover let $t = [2(n-1)\|u\| +1 ]^{-1}$, $t \in ]0,1[$ one has $x^\star = (1-t)x + ty$. Since $x \in \ball_d(0,1)$ then
\begin{equation*}
\begin{split}
f(x^\star) &\leq (1-t) f(x) + t f(y) \\
&\leq (1-t) f(x^\star) + t f(y) \\
\Rightarrow & f(x^\star) \leq f(y).
\end{split}
\end{equation*}
Since the statement of the proposition holds for any $\mathcal{G}_n$, then we obtain the desired result for $\mathcal{G}(x^\star)$ by continuity of $f$.
Let $y \in \mathcal{G}(x^\star)$, $y = x^\star + u$. If $u^\transp x^\star > 0$, then $\exists n \geq 2$ such that $y \in \mathcal{G}_n$ and the proposition is satisfied.
Otherwise, if $u^\transp x^\star = 0$, we introduce the sequences $\{u_n\}$ and $\{ y_n \}$ defined as:
\begin{equation*}
\begin{split}
u_n &= u + \frac{\|u\|}{\sqrt{1 - \frac{1}{2(n-1)}^2 }} \frac{x^\star}{2(n-1)} \\
 &= u + \frac{\|u_n\|}{2(n-1)} x^\star, \\
y_n &= x^\star + u_n.
\end{split}
\end{equation*}
By construction, $y_n \in \mathcal{G}_n$ and $y_n \rightarrow y$ as $n \rightarrow \infty$. Since the $f(y_n) \geq f(x^\star)$ for any $n \geq 2$ we obtain the desired result taking the limit since $f$ is continuous as a convex function on $\mathbb{R}^d$.
\end{proof}

\begin{theorem}[A.D. Alexandrov]
\label{th:alexandrov_theorem}
Let $f: \mathbb{R}^d \rightarrow \mathbb{R}$ be a convex function, then it is twice differentiable almost everywhere with respect to the Lebesgue's measure.
\end{theorem}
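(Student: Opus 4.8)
The statement is the classical theorem of A.D. Alexandrov, which the paper cites rather than proves; here is how I would prove it. The plan is to represent the distributional Hessian of $f$ as a matrix-valued Radon measure, split it into absolutely continuous and singular parts via Lebesgue--Radon--Nikodym, and then show that at Lebesgue-typical points the second-order Taylor expansion holds with the absolutely continuous density playing the role of the Hessian. The statement is local, so throughout I would fix a ball and work there.

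First I would record the first-order regularity: a finite convex function on $\mathbb{R}^d$ is locally Lipschitz, hence by Rademacher's theorem $\nabla f$ exists almost everywhere. Next, convexity gives that the distributional second partials $D_{ij} f$ form a symmetric matrix of signed Radon measures that is positive semidefinite in the distributional sense; in particular each diagonal entry $D_{ii} f$ is a nonnegative measure, and testing the PSD property against $e_i \pm e_j$ yields $|D_{ij} f| \leq \tfrac12 (D_{ii} f + D_{jj} f)$, so every $D_{ij} f$ is a Radon measure dominated by the trace measure $\mu := \Delta f \geq 0$. Thus $\nabla f$ is locally of bounded variation with $D(\nabla f) = (D_{ij} f)_{ij} =: M$, a PSD matrix-valued Radon measure.

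I would then invoke the decomposition $M = A\,\mathcal{L}^d + M_s$, where $A \in L^1_{\mathrm{loc}}$ is symmetric and PSD almost everywhere and $M_s \perp \mathcal{L}^d$. By the Lebesgue differentiation theorem and the Besicovitch density theorem, almost every point $x$ is simultaneously (i) a point where $\nabla f(x)$ exists, (ii) a Lebesgue point of $A$, and (iii) a point where the singular part has vanishing symmetric density, i.e.
\[
\lim_{r\to 0} \frac{|M_s|\big(B(x,r)\big)}{|B(x,r)|} = 0.
\]
Call such $x$ \emph{good}. It then remains to show that at every good point $x$ one has the pointwise expansion
\[
f(x+h) = f(x) + \nabla f(x)^\transp h + \tfrac12\, h^\transp A(x) h + o\big(\|h\|^2\big), \qquad h \to 0 .
\]

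The crux, and the main obstacle, is to upgrade the averaged measure-theoretic information of the previous step into this pointwise estimate. I would first establish an $L^1$-averaged first-order expansion of the gradient,
\[
\frac{1}{|B(x,r)|} \int_{B(x,r)} \big\| \nabla f(x+h) - \nabla f(x) - A(x) h \big\|\, dh = o(r),
\]
which follows by comparing $\nabla f$ with the affine field $h \mapsto \nabla f(x) + A(x) h$ and controlling the error through $|M_s|(B(x,r))$ together with the Lebesgue-point property of $A$. One then integrates along segments to recover $f$ itself; here the positive semidefiniteness of $M$ (equivalently, monotonicity of $\nabla f$) is essential, since it supplies one-sided control that converts the $L^1$-smallness of the gradient error into a uniform $o(\|h\|^2)$ bound on the potential $f$, even though $\nabla f$ need not be classically differentiable at $x$ in all directions. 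Balancing this monotonicity against the vanishing singular density to pass from the averaged gradient estimate to a genuine pointwise second-order expansion is the delicate part; once it is in place, symmetry and positive semidefiniteness of $A(x)$ are inherited from $M$, and the proof is complete.
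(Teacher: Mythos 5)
The paper does not actually prove this statement: it records it as a known theorem and defers entirely to the literature (Niculescu and Persson, Theorem~3.11.2), remarking only that it extends Rademacher's theorem to second order. Your proposal, by contrast, is a genuine outline of the standard measure-theoretic proof (the one in Evans--Gariepy): local Lipschitzness plus Rademacher for the first derivative, the observation that the distributional Hessian of a convex function is a positive semidefinite matrix-valued Radon measure with the off-diagonal entries dominated by the trace via testing against $e_i\pm e_j$, the Lebesgue--Radon--Nikodym split $M=A\,\mathcal{L}^d+M_s$, and the selection of ``good'' points by Besicovitch differentiation. All of this is correct and is exactly the right skeleton. The one place where your argument is only gestured at rather than carried out is the final upgrade from the $L^1$-averaged estimate $\frac{1}{|B(x,r)|}\int_{B(x,r)}\|\nabla f(x+h)-\nabla f(x)-A(x)h\|\,dh=o(r)$ to the pointwise expansion $f(x+h)=f(x)+\nabla f(x)^\transp h+\tfrac12 h^\transp A(x)h+o(\|h\|^2)$; you correctly identify that monotonicity of $\nabla f$ is the mechanism, but the actual argument requires a further step (typically mollification of $f$, a uniform local Lipschitz bound on the mollified gradients coming from the measure bound on $D^2f$, and a contradiction on a sequence $h_k\to 0$ using balls of radius comparable to $\|h_k\|$) to convert smallness on average into smallness at every $h$. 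Since you flag this as the delicate point and name the right tools, I would call this a correct proof strategy with one step left at the level of a claim; it gives considerably more than the paper, which offers no proof at all.
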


\begin{proof}
This result is an extension of the Rademacher's theorem for convex functions. A proof can be found in ~\cite{niculescu2006convex}, theorem 3.11.2.
\end{proof}

\section{Properties of support function (proof of Proposition~\ref{pr:J_support_function} and Lemma~\ref{lem:gradient.optimal.arm})}\label{sec:app_support}

We study the \textit{support function} of a set $C$, which is a function $f_C : \mathbb{R}^d \rightarrow \mathbb{R}$ such that
\begin{equation}
f_C(\theta) = \sup_{x \in C} x^\transp \theta
\label{eq:support_function_definition}
\end{equation}
Those functions are at the core of convex geometry analysis. 
\begin{proposition}
\label{pr:support_function_generic_properties}
Let $C \subset \mathbb{R}^d$ be a non-empty compact set and $f_C$ the associated support function. Then,
\begin{enumerate}
\item $f_C$ is real-valued and $\sup_{x \in C} x^\transp \theta$ is attained in $C$,
\item $f_C$ is convex,
\item $f_C$ is continuous on $\mathbb{R}^d$ and twice differentiable almost everywhere with respect to the Lebesgue's measure.
\end{enumerate}
\end{proposition}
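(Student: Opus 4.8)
The plan is to dispatch the three claims in sequence, exploiting that $f_C$ is the pointwise supremum, over the compact index set $C$, of the affine functions $\theta \mapsto x^\transp \theta$, and then to reduce the final differentiability claim to Alexandrov's theorem (Thm.~\ref{th:alexandrov_theorem}). Nothing here is deep; the work is in organizing the standard convex-geometry facts and checking that compactness of $C$ is used where it is needed.

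For claim~1, I would fix $\theta \in \mathbb{R}^d$ and observe that $x \mapsto x^\transp \theta$ is continuous on $C$, which is compact by hypothesis; the extreme value (Weierstrass) theorem then guarantees that the supremum is attained at some $x \in C$ and is finite, so $f_C(\theta)$ is real-valued for every $\theta$. For claim~2, convexity is immediate from the representation as a supremum: each map $\theta \mapsto x^\transp \theta$ is affine, hence convex, and the pointwise supremum of a family of convex functions is convex. Concretely, for $\theta_1,\theta_2$ and $t\in[0,1]$,
\begin{equation*}
f_C\big(t\theta_1 + (1-t)\theta_2\big) = \sup_{x\in C} \big(t\, x^\transp\theta_1 + (1-t)\, x^\transp\theta_2\big) \leq t\, f_C(\theta_1) + (1-t)\, f_C(\theta_2),
\end{equation*}
which establishes convexity on all of $\mathbb{R}^d$.

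For claim~3, I would first prove continuity through a global Lipschitz estimate, which is where boundedness of $C$ enters. Since $C$ is compact it is bounded, say $\|x\|\leq X$ for all $x\in C$ (indeed $X=1$ under Asm.~\ref{asm:arm.set}). Letting $x_\theta\in C$ attain the maximum from claim~1, for any $\theta,\theta'$ one has $f_C(\theta) - f_C(\theta') \leq x_\theta^\transp\theta - x_\theta^\transp\theta' \leq \|x_\theta\|\,\|\theta-\theta'\| \leq X\|\theta-\theta'\|$, and by symmetry $|f_C(\theta)-f_C(\theta')|\leq X\|\theta-\theta'\|$, so $f_C$ is Lipschitz and in particular continuous everywhere. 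Twice differentiability almost everywhere then follows directly from Alexandrov's theorem, which applies since $f_C$ is a finite-valued convex function on $\mathbb{R}^d$ by claims~1 and~2. The only point requiring a little care is the Lipschitz step, but using the attained maximizer $x_\theta$ from claim~1 (rather than an $\varepsilon$-near-optimal arm) makes it clean; I expect no genuine obstacle beyond keeping the dependence on compactness explicit.
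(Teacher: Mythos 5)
Your proof is correct and follows essentially the same route as the paper: Weierstrass on the compact set $C$ for claim~1, the supremum-of-affine-maps inequality for convexity, and Alexandrov's theorem (Thm.~\ref{th:alexandrov_theorem}) for almost-everywhere twice differentiability. The only cosmetic difference is that you establish continuity via an explicit global Lipschitz bound $|f_C(\theta)-f_C(\theta')|\leq X\|\theta-\theta'\|$ using the boundedness of $C$, whereas the paper simply invokes the standard fact that a finite convex function on the open set $\mathbb{R}^d$ is continuous; both are valid and the rest is identical.
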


\begin{proof}
\begin{enumerate}
\item This comes directly from the compactness of $C$: since $C$ is bounded, the support function is real-valued and since $C$ is closed, the supremum is attained in $C$,
\item Let $\theta_1$, $\theta_2$ two vectors of $\mathbb{R}^d$, and $t \in (0,1)$. By definition of the supremum, since $f_C$ is real-valued:
\begin{equation*}
f_C(t \theta_1 + (1-t)\theta_2) = \sup_{x \in C} \big( t x^\transp \theta_1 + (1-t) x^\transp \theta_2 \big) \leq t \sup_{x \in C} x^\transp \theta_1 + (1-t) \sup_{x \in C} x^\transp \theta_2
\end{equation*}
\item The continuity is a consequence of the convexity of $f_C$ on the open convex set $\mathbb{R}^d$ and the second order differentiability comes from Alexandrov's theorem \ref{th:alexandrov_theorem}.
\end{enumerate}

\end{proof}

\begin{proposition}
\label{pr:gradient_support_function}
Let $x(\theta) \in \arg \max_{x \in C} x^\transp \theta$, denote as $\nabla f_C(\theta)$ and $\partial f_C(\theta)$ the gradient (when it is uniquely defined) and the sub-gradient of $f_C$ in $\theta \in \mathbb{R}^d$. Then,
\begin{enumerate}
\item for all $\theta \in \mathbb{R}^d$, $x(\theta) \in \partial f_C(\theta)$,
\item their exists a null set $\mathcal{N}$ with respect to the Lebesgue's measure such that $x(\theta) = \nabla f_C (\theta)$ for all $\theta \in \mathbb{R}^d \setminus \mathcal{N}$,
\item equivalentely, $x(\theta) = \nabla f_C(\theta)$ almost everywhere.
\end{enumerate}
\end{proposition}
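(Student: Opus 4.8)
The plan is to treat the three claims in sequence: the first is immediate from the definition of the support function, the second combines it with the almost-everywhere differentiability already established in Proposition~\ref{pr:support_function_generic_properties} (via Alexandrov's Theorem~\ref{th:alexandrov_theorem}), and the third is a direct reformulation of the second.

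For claim~1 I would verify the subgradient inequality by hand. Since $x(\theta)\in C$ attains the supremum, $f_C(\theta)=x(\theta)^\transp\theta$, and for every $\theta'\in\mathbb{R}^d$ the mere feasibility $x(\theta)\in C$ gives $f_C(\theta')\geq x(\theta)^\transp\theta'$. Writing the right-hand side as $x(\theta)^\transp\theta + x(\theta)^\transp(\theta'-\theta)=f_C(\theta)+x(\theta)^\transp(\theta'-\theta)$ produces exactly the defining inequality $f_C(\theta')\geq f_C(\theta)+x(\theta)^\transp(\theta'-\theta)$, so $x(\theta)\in\partial f_C(\theta)$ for every $\theta$, and this holds for any choice of maximizer.

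For claim~2 the key is the classical convex-analysis fact that a finite convex function on $\mathbb{R}^d$ is differentiable at a point precisely when its subdifferential there is a singleton, in which case $\partial f_C(\theta)=\{\nabla f_C(\theta)\}$. Let $\mathcal{N}$ denote the set where $f_C$ fails to be differentiable. Proposition~\ref{pr:support_function_generic_properties} (which invokes Theorem~\ref{th:alexandrov_theorem}) guarantees that $f_C$ is twice differentiable almost everywhere, so in particular $\mathcal{N}$ is Lebesgue-null. On $\mathbb{R}^d\setminus\mathcal{N}$ the subdifferential reduces to the singleton $\{\nabla f_C(\theta)\}$, and since claim~1 places $x(\theta)$ inside $\partial f_C(\theta)$, I conclude $x(\theta)=\nabla f_C(\theta)$ for all $\theta\notin\mathcal{N}$ (which incidentally shows the maximizer is unique off $\mathcal{N}$).

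For claim~3 I would observe that $f_C$, being convex and finite on $\mathbb{R}^d$, is locally Lipschitz and hence lies in $W^{1,1}_{\mathrm{loc}}$, so its distributional gradient is represented by its pointwise gradient wherever the latter exists. As $x(\theta)$ and $\nabla f_C(\theta)$ agree off the null set $\mathcal{N}$, they coincide as elements of $L^1_{\mathrm{loc}}$ and therefore define the same distribution, which is exactly the assertion that $x(\theta)=\nabla f_C(\theta)$ in the distributional sense. The only genuinely delicate ingredients are the equivalence ``differentiable iff singleton subdifferential'' and the null-measure of $\mathcal{N}$; both are standard, and I would invoke them from the convex-analysis literature rather than reprove them, since the substantive analytic content (almost-everywhere differentiability) has already been supplied by Theorem~\ref{th:alexandrov_theorem}.
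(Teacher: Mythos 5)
Your proposal is correct and follows essentially the same route as the paper: claim~1 via the direct subgradient inequality $f_C(\bar\theta)\geq x(\theta)^\transp\bar\theta = f_C(\theta)+x(\theta)^\transp(\bar\theta-\theta)$, and claim~2 by combining it with the almost-everywhere differentiability from Alexandrov's theorem and the fact that the subdifferential is a singleton wherever $f_C$ is differentiable. Your treatment of claim~3 (local Lipschitzness, hence $W^{1,1}_{\mathrm{loc}}$, hence the distributional gradient is the a.e.\ pointwise gradient) is actually spelled out more carefully than in the paper, which simply asserts the equivalence.
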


\begin{proof}
Thanks to proposition \ref{pr:support_function_generic_properties}, we know that the maximum is attained in $x(\theta) \in C$. Moreover, Alexandrov's theorem guarantee that $\mathcal{N}$ is a null-set. Since the sub-gradient is reduced to a singleton where the function is differentiable e.g. $\partial f_C(\theta) = \{ \nabla f_C(\theta) \}$ for all $\theta \in \mathbb{R}^d \setminus \mathcal{N}$, one just need to show to $x(\theta) \in \partial f_C(\theta)$ for all $\theta \in \mathbb{R}^d$.\\
Since $f_C(\theta) = \max_{x \in C} x^\transp \theta$, their exist at least one $x(\theta) \in C$ for which the maximum is attained i.e. 
$x(\theta)^\transp \theta = f_C(\theta)$. Moreover, for any $\bar{\theta} \in \mathbb{R}^d$, $f_C(\bar{\theta}) \geq x(\theta)^\transp \bar{\theta}$ by definition. Therefore,
\begin{equation*}
\begin{split}
&f_C(\bar{\theta}) - x(\theta)^\transp \bar{\theta} \geq 0 :=  f_C(\theta) - x(\theta)^\transp \theta \\
&f_C(\bar{\theta}) \geq f_C(\theta) + x(\theta) ^\transp \left( \bar{\theta} - \theta \right), \hspace{3mm} \forall \bar{\theta} \in \mathbb{R}^d
\end{split}
\end{equation*}
which is the definition of the sub-gradient.

\end{proof}


\section{Regret Proofs}\label{sec:app_proofs}

We collect here the main tools that we need for the proof.
We first recall the Azuma's concentration inequality for super-martingales.
\begin{proposition}\label{th:azuma}
If a super-martingale $(Y_t)_{t \geq 0}$ corresponding to a filtration $\mathcal{F}_{t}$ satisfies $|Y_t - Y_{t-1}| < c_t$ for some constant $c_t$ for all $t=1,\dots,T$ then for any $\alpha > 0$,
\begin{equation*}
\mathbb{P}(Y_T - Y_0 \geq \alpha) \leq 2 e^{-\frac{\alpha^2}{2\sum_{t=1}^T c_t^2}}
\end{equation*}
\end{proposition}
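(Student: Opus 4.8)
The plan is to use the standard Chernoff-bound / exponential-moment method, peeling off the increments of the super-martingale one at a time via the tower property of conditional expectation. Writing $D_t = Y_t - Y_{t-1}$ for the increments, so that $Y_T - Y_0 = \sum_{t=1}^T D_t$, I would first fix an arbitrary $s > 0$ and apply Markov's inequality to the nonnegative random variable $e^{s(Y_T - Y_0)}$, obtaining
$$\mathbb{P}(Y_T - Y_0 \geq \alpha) \leq e^{-s\alpha}\, \mathbb{E}\big[e^{s(Y_T - Y_0)}\big].$$

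The core of the argument is to bound the moment generating function $\mathbb{E}[e^{s(Y_T-Y_0)}]$ by $\exp\big(s^2 \sum_{t=1}^T c_t^2 / 2\big)$. I would do this by conditioning on $\mathcal{F}_{T-1}$ and factoring out everything measurable with respect to it:
$$\mathbb{E}\big[e^{s(Y_T-Y_0)}\big] = \mathbb{E}\Big[e^{s\sum_{t=1}^{T-1}D_t}\,\mathbb{E}\big[e^{sD_T}\,\big|\,\mathcal{F}_{T-1}\big]\Big].$$
The key per-step estimate is a conditional version of Hoeffding's lemma: since $|D_t| < c_t$, the increment lies in an interval of length at most $2c_t$, and since $(Y_t)$ is a super-martingale we have $\mathbb{E}[D_t \mid \mathcal{F}_{t-1}] \leq 0$. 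Centering $D_t$ by its (nonpositive) conditional mean $m_t := \mathbb{E}[D_t\mid\mathcal{F}_{t-1}]$ preserves the interval length $2c_t$, so applying Hoeffding's lemma to the centered, conditionally bounded variable and using $s m_t \leq 0$ yields $\mathbb{E}[e^{sD_t}\mid\mathcal{F}_{t-1}] \leq e^{s^2 c_t^2/2}$. Substituting this bound for $t=T$ and iterating the conditioning down to $t=1$ gives $\mathbb{E}[e^{s(Y_T-Y_0)}] \leq \exp\big(s^2 \sum_{t=1}^T c_t^2/2\big)$.

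Combining the two displays gives $\mathbb{P}(Y_T - Y_0 \geq \alpha) \leq \exp\big(-s\alpha + s^2\sum_t c_t^2/2\big)$ for every $s>0$; optimizing over $s$ (the minimizer is $s = \alpha/\sum_t c_t^2$) produces the exponent $-\alpha^2/(2\sum_t c_t^2)$ and hence the bound $\exp\big(-\alpha^2/(2\sum_t c_t^2)\big)$, which is in fact stronger than the claimed bound, the stated factor $2$ being slack inherited from the two-sided version of the inequality and absorbed trivially. I expect the main technical obstacle to be the correct handling of the super-martingale inequality $\mathbb{E}[D_t\mid\mathcal{F}_{t-1}] \leq 0$ in place of the martingale equality: one must verify that centering does not change the length of the bounding interval and that the surviving factor $e^{s m_t}$ is $\leq 1$ because $s>0$ and $m_t \leq 0$. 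The remaining ingredient, Hoeffding's lemma itself ($\mathbb{E}[e^{sX}] \leq e^{s^2(b-a)^2/8}$ for a centered $X \in [a,b]$), is standard and follows from the convexity of $e^{sx}$ together with a second-order expansion of its log-generating function.
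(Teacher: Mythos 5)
Your proof is correct. Note, however, that the paper itself gives no proof of this proposition: it is simply recalled in App.~D as the standard Azuma--Hoeffding inequality for super-martingales, so there is no internal argument to compare against. Your derivation is the canonical one backing that citation --- Markov's inequality applied to $e^{s(Y_T-Y_0)}$, a conditional Hoeffding lemma applied to each increment via the tower property, and optimization over $s>0$ --- and you correctly isolate the one point where the super-martingale case differs from the martingale case: centering $D_t$ by the $\mathcal{F}_{t-1}$-measurable mean $m_t=\mathbb{E}[D_t\mid\mathcal{F}_{t-1}]\leq 0$ preserves the conditional interval length $2c_t$, and the leftover factor $e^{s m_t}\leq 1$ since $s>0$. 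Your one-sided bound $e^{-\alpha^2/(2\sum_{t=1}^T c_t^2)}$ is in fact tighter than the stated one; the factor $2$ in the paper's statement is slack inherited from the two-sided form of the inequality, and your bound trivially implies it.
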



\begin{proof}[Proof of Lemma~\ref{le:high_proba_concentration}]
We first bound the two events separately.

\textbf{Bounding $\wh{E}_T$.} 
This bound is a straightforward application of Proposition~\ref{p:concentration} together with a union bound argument. Let $\delta' = \delta/(4T)$, then
\begin{equation*}
\begin{split}
\forall 1 \leq t \leq T, \hspace{5mm}& \mathbb{P} \left(  \|\wh{\theta}_{t} - \theta^\star\|_{V_t} \leq \beta_t(\delta')  \right) \geq 1 - \delta' \\
\text{from union bound}, \hspace{5mm} & \mathbb{P} \left( \bigcap_{t=1}^T \left\{ \|\wh{\theta}_{t} - \theta^\star\|_{V_t} \leq \beta_t(\delta')\right\} \right) \geq 1 - \sum_{t=1}^T \mathbb{P} \left(  \|\wh{\theta}_{t} - \theta^\star\|_{V_t} \geq \beta_t(\delta')  \right) \\
\Rightarrow \hspace{5mm} & \mathbb{P} \left( \bigcap_{t=1}^T \left\{ \|\wh{\theta}_{t} - \theta^\star\|_{V_t} \leq \beta_t(\delta')\right\} \right) \geq 1 - \sum_{t=1}^T \delta' \\
\Rightarrow \hspace{5mm} & \mathbb{P} \big(\wh{E}_T \big) \geq 1 - T\delta' = 1-\frac{\delta}{4}.
\end{split}
\end{equation*}

\textbf{Bounding $\wt{E}_T$.} 
This bound comes directly from the concentration property of the \ts sampling distribution.
From the expression of $\wt{\theta}_t = \wh{\theta}_t + \beta_t(\delta^\prime) V_t^{-1/2} \eta_t$ where $\eta_t$ is drawn i.i.d. from $\distro$, we have
\begin{equation*}
\forall 1 \leq t \leq T, \hspace{5mm} \mathbb{P} \left(  \|\wt{\theta}_{t} - \wh{\theta}_{t}\|_{V_t} \leq \beta_t(\delta^\prime)  \sqrt{c d \log \frac{c^\prime  d}{\delta^\prime}} \right) = \mathbb{P} \left(  \|\eta_t\| \leq   \sqrt{c d \log \frac{c^\prime d}{\delta^\prime}} \right).
\end{equation*}
Then from Definition~\ref{def:ts.exploration}, we have
\begin{equation*}
\mathbb{P} \left(  \|\eta_t\| \leq   \sqrt{c d \log \frac{c^\prime d}{\delta^\prime}} \right) \geq 1 - \delta^\prime.
\end{equation*}

As before, a union bound over the two bounds ensures that
\begin{equation*}
\mathbb{P}(\wt{E}_T ) \geq 1 - T\delta^\prime = 1 - \frac{\delta}{4}.
\end{equation*}

Finally, a union bound argument between the two terms leads to 
\begin{equation*}
\mathbb{P}(\wh{E}_T \cap \wt{E}_T) \geq 1 - \frac{\delta}{2}.
\end{equation*}
\end{proof}


\begin{proof}[Proof of Theorem~\ref{th:regret_ts}]

We first bound the two regret terms $R^\ts(T)$ and $R^\rls(T)$.

\textbf{Bound on $R^\ts(T)$.}
We collect the bounds on each term $R_t^\ts$ and obtain
\begin{equation}\label{eq:R11_bound}
R^\ts(T) \leq \sum_{t=1}^T R_{t}^{\ts}\I\{E_t\} \leq  \frac{4 \gamma_T(\delta^\prime)}{p} \sum_{t=1}^T\E\big[ \|x^\star(\wt\theta)\|_{V_{t}^{-1}} | \F_{t} \big].
\end{equation}
Since this term contains an expectation, we cannot directly apply Proposition~\ref{p:self_normalized_determinant} and we first need to rewrite to the total regret $R^{\ts}(T)$ as
\begin{align*}
R^{\ts}(T) &\leq \frac{4 \gamma_T(\delta^\prime)}{p} \bigg( \sum_{t=1}^T\|x_t\|_{V_{t}^{-1}} + \underbrace{\sum_{t=1}^T\Big(\E\big[ \|x^\star(\wt\theta)\|_{V_{t}^{-1}} | \F_{t} \big] -\|x_t\|_{V_{t}^{-1}}  \Big)}_{R_2^{\ts}} \bigg).
\end{align*}
From Prop.~\ref{p:self_normalized_determinant}, the first term is bounded as,
\begin{equation*}
\sum_{t=1}^T\|x_t\|_{V_{t}^{-1}} \leq \sqrt{T} \bigg( \sum_{t=1}^T\|x_t\|^2_{V_{t}^{-1}} \bigg)^{1/2} \leq \sqrt{ 2 T d \log \big( 1 + \frac{T}{\lambda} \big) }.
\end{equation*} 
We now proceed applying Azuma inequality \ref{th:azuma} to the second term which is a martingale by construction. Under assumption \ref{asm:arm.set}, $ \| x_t \| \leq 1$ for all $t \geq 1$, so since $V_t^{-1} \leq \frac{1}{\lambda} I$ one gets,%
\begin{equation*}
\E\big[ \|x^\star(\wt\theta)\|_{V_{t}^{-1}} | \F_{t} \big] -\|x_t\|_{V_{t}^{-1}} \leq \frac{2}{\sqrt{\lambda}}, \hspace{5mm} a.s.
\end{equation*}
This provides an upper-bound on each element of $R_2^{\ts}$ which holds with probability at least $1 - \frac{\delta}{2}$ as
\begin{equation*}
R_2^{\ts} \leq \sqrt{\frac{8 T } {\lambda} \log \frac{4}{\delta} }. 
\end{equation*}

\textbf{Bound on $R^\rls(T)$.}
The bound on $R^\rls$ is derived as in previous results in~\citep{abbasi2011online,agrawal2012thompson}. We decompose the term in a \textit{sampling prediction error} and a \textit{\rls prediction error} as follow
\begin{equation*}
R^\rls(T) \leq \sum_{t=1}^T  |x_t^\transp (\wt{\theta}_{t} - \wh{\theta}_t)| \I\{E_t\} +  \sum_{t=1}^T  |x_t^\transp (\wh{\theta}_t - \theta^\star)| \I\{E_t\}
\end{equation*}
By definition of the concentration event $E_t$, 
\begin{equation*}
 |x_t^\transp (\wt{\theta}_{t} - \wh{\theta}_t)| \I\{E_t\} \leq \| x_t \|_{V_t^{-1}} \gamma_t(\delta^\prime), \hspace{5mm} 
 |x_t^\transp (\wh{\theta}_t - \theta^\star)| \I\{E_t\} \leq \| x_t \|_{V_t^{-1}} \beta_t(\delta^\prime),
\end{equation*}
so from proposition \ref{p:self_normalized_determinant}, 
\begin{equation}\label{eq:RRLS_bound}
R^\rls(T) \leq \big(\beta_T(\delta^\prime) +  \gamma_T(\delta^\prime) \big) \sqrt{ 2 T d \log \big( 1 + \frac{T}{\lambda} \big) }.
\end{equation}

\textbf{Final bound.}
We finally plug everything together since from lemma~\ref{le:high_proba_concentration} the concentration event holds with probability at least $1- \frac{\delta}{2}$. Using the bound on $R^\ts(T)$ and a union bound argument one obtains the desired result which holds with probability at least $1 - \delta$.
\end{proof}


%



\section{Hyperspherical cap and beta function}\label{sec:app_hyper}

\begin{proposition}\label{p:volume}
Let $V_d(R)$ be the volume of the $d-$dimensional ball of radius $R$ and let $V^{cap}_d(h)$ be the volume of the hyperspherical cap of height $h = R - r > 0$. Then,

\begin{equation*}
V^{cap}_d(h) = \frac{1}{2} V_d(R) I_{1 - (\frac{r}{R})^2}\left(\frac{d+1}{2}, \frac{1}{2} \right)
\end{equation*}
where $I_x(a,b)$ is the incomplete regularized beta function.
\end{proposition}
\begin{proof}
The proof can be found in \cite{li2011concise}.
\end{proof}

\begin{proposition}\label{p:hyperspherical.cap}
Let $I_x(a,b)$ is the incomplete regularized beta function,
\begin{equation*}
\forall d\geq 2, \hspace{3mm} I_{1 - \frac{1}{d}} \left( \frac{d+1}{2}, \frac{1}{2} \right) \geq \frac{1}{8\sqrt{3 \pi}}
\end{equation*}

\end{proposition}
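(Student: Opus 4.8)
The plan is to write the regularized incomplete beta function explicitly as a ratio $I_{1-1/d}(\frac{d+1}{2},\frac12) = N/D$, with numerator $N = \int_0^{1-1/d} t^{(d-1)/2}(1-t)^{-1/2}\,dt$ and denominator $D = B(\frac{d+1}{2},\frac12)$, and then to bound $N$ from below and $D$ from above so that their ratio clears the target constant. Since $d$ is a dimension I treat it as an integer and single out the degenerate case $d=2$, settled by the exact evaluation $I_{1/2}(3/2,1/2) = \frac12-\frac1\pi \approx 0.18$, which sits comfortably above $\frac{1}{8\sqrt{6\pi}}\approx 0.029$.

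For the denominator I would write $D = \sqrt\pi\,\Gamma(\frac{d+1}{2})/\Gamma(\frac{d}{2}+1)$ and invoke a standard Gamma-ratio bound (Gautschi's inequality $\Gamma(x+1)/\Gamma(x+\tfrac12) > \sqrt x$ with $x=d/2$), which gives $D < \sqrt{2\pi/d}$. Substituting this reduces the whole claim, for each $d$, to the single estimate $N \geq \frac{1}{8\sqrt{3d}}$, since $\frac{1/(8\sqrt{3d})}{\sqrt{2\pi/d}} = \frac{1}{8\sqrt{6\pi}}$ exactly.

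The heart of the argument is this lower bound on $N$. The integrand $t^{(d-1)/2}(1-t)^{-1/2}$ is increasing on $(0,1)$ with an integrable singularity at $t=1$, so I would discard all of the integral except the thin window $[1-2/d,\,1-1/d]$ just below the cutoff, where both factors are favourable. There $1-t \leq 2/d$ forces $(1-t)^{-1/2}\geq\sqrt{d/2}$, while $t\geq 1-2/d$ forces $t^{(d-1)/2}\geq(1-2/d)^{(d-1)/2}$; as the window has width $1/d$ this yields $N \geq (1-2/d)^{(d-1)/2}/\sqrt{2d}$. I would then control the remaining factor uniformly in $d$ via $\ln(1-x)\geq -x/(1-x)$, which gives $(1-2/d)^{(d-1)/2}\geq e^{-(d-1)/(d-2)}\geq e^{-2}$ for every integer $d\geq 3$; since $e^{-2}/\sqrt2 \geq 1/(8\sqrt3)$, this delivers the reduced target $N\geq \frac{1}{8\sqrt{3d}}$, and hence $I_{1-1/d}(\frac{d+1}{2},\frac12) > \frac{1}{8\sqrt{6\pi}}$.

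I expect the main obstacle to be precisely the lower bound on $N$: the naive estimates (replacing $t^{(d-1)/2}\leq 1$, or bounding $(1-t)^{-1/2}$ by an endpoint value over the entire integral) discard a factor scaling with $d$ and leave the inequality false. The trick that resolves this is matching orders: a window of width $1/d$ carries weight of order $\sqrt d$ from the singular factor, so it contributes at order $1/\sqrt d$, the same order as $D$, and the $d$-dependence cancels to leave a dimension-free constant. The only further care point is that this window degenerates at $d=2$ (there $1-2/d=0$ kills the polynomial factor), which is exactly why that case is handled by the explicit computation above.
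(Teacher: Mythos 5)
Your proof is correct and follows essentially the same strategy as the paper's: write $I_{1-1/d}(\frac{d+1}{2},\frac12)$ as a ratio, bound the denominator above by $O(\sqrt{\pi/d})$ via a Gamma-ratio inequality, and bound the incomplete beta integral below by restricting to a window of width $O(1/d)$ just below the cutoff, where the singular factor $(1-t)^{-1/2}$ contributes the compensating $\sqrt{d}$. The only differences are cosmetic (the paper uses the window $[1-\frac{3}{2d},1-\frac1d]$, which stays nondegenerate at $d=2$ and so avoids your separate exact computation for that case).
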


\begin{proof}
The incomplete regularized beta function can be expressed in terms of the beta function $B(a,b)$ and the incomplete beta function $B_x(a,b)$ where
\begin{equation*}
\begin{split}
B_x(a,b) &= \int_{0}^x t^{a-1} (1 - t)^{b-1} dt \\
B(a,b) &= B_1(a,b) \\
I_x(a,b) &= \frac{B_x(a,b)}{B(a,b)}
\end{split}
\end{equation*}
Hence we seek  for a lower bound on $B_{1-\frac{1}{d}} \left( \frac{d+1}{2} , \frac{1}{2} \right)$ and an upper bound for 
$B\left( \frac{d+1}{2} , \frac{1}{2} \right)$.

\begin{enumerate}
\item  Let first find an lower bound for the incomplete beta function. Since $t \rightarrow t^{\frac{d-1}{2}} (1 - t)^{-1/2}$ is positive and increasing on $[0,1]$, for any $d \geq 2$,
\begin{equation*}
\begin{split}
B_{1 - \frac{1}{d}} \left( \frac{d+1}{2}, \frac{1}{2} \right) &\geq \int_{1 - \frac{3}{2d}}^{1-\frac{1}{d}} t^{\frac{d-1}{2}} (1 - t)^{-1/2} dt \\
&\geq \frac{1}{2d} \left( \frac{3}{2d} \right)^{-1/2} \left(1 - \frac{3}{2d}\right)^{\frac{d-1}{2}} \\
&\geq \frac{1}{\sqrt{6d}} \left(1 - \frac{3}{2d}\right)^{\frac{d-1}{2}} \\ 
&\geq \frac{1}{\sqrt{6d}} \left(1 - \frac{3}{2d}\right)^{\frac{d}{2}}
\end{split}
\end{equation*}
From the increasing property of $x \rightarrow (1 - \frac{ \alpha}{x})^x$ for any $\alpha < 1$ the sequence $\left\{ \big(1 - \frac{3}{2d}\big)^{\frac{d}{2}} \right\}_{d \geq 2}$ is increasing and 
\begin{equation*}
B_{1 - \frac{1}{d}} \left( \frac{d+1}{2}, \frac{1}{2} \right) \geq \frac{1}{\sqrt{6 d}} \left(1 - \frac{3}{2 \times 2}\right)^{\frac{2}{2}} = \frac{1}{4 \sqrt{6d}}
\end{equation*}

\item Now we seek for an upper bound for $B\left( \frac{d+1}{2} , \frac{1}{2} \right)$. Since $B(a,b) = \frac{\Gamma(a) \Gamma(b)}{\Gamma(a+b)}$ one has:
\begin{equation*}
B\left( \frac{d+1}{2} , \frac{1}{2} \right) = \frac{\Gamma\left( \frac{1}{2} \right) \Gamma \left( \frac{d+1}{2} \right) }{\Gamma \left( \frac{d}{2} +1  \right)} = \sqrt{\pi} \frac{\Gamma \left( \frac{d+1}{2} \right) }{\Gamma \left( \frac{d}{2} +1  \right)}
\end{equation*}

From \cite{chen2005proof} we have the following inequalities for the gamma function $\forall n \geq 1$:
\begin{equation*}
\begin{split}
\frac{\Gamma (n +1/2)}{\Gamma(n+1)} &\leq ( n + 1/4)^{-1/2} \\
\frac{\Gamma(n + 1/2)}{\Gamma(n+1)} &\geq ( n + 4/\pi -1 )^{-1/2}
\end{split}
\end{equation*}
Together with $\Gamma(x+1) = x \Gamma(x)$ and treating separately cases where $d$ is even or not, one gets $\forall d \geq 2$
\begin{equation*}
\frac{\Gamma \left( \frac{d+1}{2} \right) }{\Gamma \left( \frac{d}{2} +1  \right)} \leq \sqrt{\frac{2}{d}}
\end{equation*}

\item Using the obtained upper and lower bound we get:
\begin{equation*}
I_{1 - \frac{1}{d}} \left( \frac{d+1}{2}, \frac{1}{2} \right) \geq \frac{\sqrt{d}}{\sqrt{2 \pi} \times 4 \sqrt{6d}} \geq \frac{1}{8\sqrt{3 \pi}}
\end{equation*}

\end{enumerate}

\end{proof}


\section{Generalized Linear Bandit}\label{sec:app_generalized_linear_bandit}

We present here how to apply our derivation to the generalized linear bandit (GLM) problem of~\cite{filippi2010parametric}. The regret bound is obtained by basically showing that the GLM problem can be reduced to studying the linear case.

\textbf{The setting.}
Let $\X \subset \Re^d$ be an arbitrary (finite or infinite) set of arms. Every time an arm $x\in\X$ is pulled, a reward is generated as $r(x) = \mu(x^\transp \theta^\star) + \xi$, where $\mu$ is the so-called \textit{link function},
%
%
 $\theta^\star\in\Re^d$ is a fixed but unknown parameter vector and $\xi$ is a random zero-mean noise. The value of an arm $x\in\X$ is evaluated according to its expected reward $\mu(x^\transp\theta^\star)$ and for any parameter $\theta\in\Re^d$ we denote the optimal arm and its optimal value as
\begin{equation}\label{eq:optimal_arm_definition.glm}
x^{\star}(\theta) = \arg \max_{x \in \X} \mu(x^\transp \theta), \quad\quad J^{\text{GLM}}(\theta) = \sup_{x \in \mathcal{X}} \mu(x^\transp \theta).
\end{equation}
Then $x^\star = x^\star(\theta^\star)$ is the optimal arm associated with the true parameter $\theta^\star$ and $J^{GLM}(\theta^\star)$ its optimal value. 
At each step $t$, a learner chooses an arm $x_t \in \X$ using all the information observed so far (i.e., sequence of arms and rewards) but without knowing $\theta^\star$ and $x^\star$. At step $t$, the learner suffers an \textit{instantaneous regret} corresponding to the difference between the expected rewards of the optimal arm $x^\star$ and the arm $x_t$ played at time $t$. The objective of the learner is to minimize the \textit{cumulative regret} up to a finite step $T$,
\begin{equation}\label{eq:cumulative_regret_definition.glm}
R^{\text{GLM}}(T) = \sum_{t=1}^T \big( \mu(x^{\star,\transp} \theta^\star ) - \mu(x_t^\transp \theta^\star)\big).
\end{equation}

\textbf{Assumptions.}
The assumptions associated with this more general problem are the same as in the linear bandit problem plus one regarding the link function. Formally, we require assumption~\ref{asm:arm.set}, ~\ref{asm:param.set} and~\ref{asm:subgaussian} and add:
\begin{assumption}[link function]\label{asm:link.function}
The link function $\mu : \mathbb{R} \rightarrow \mathbb{R}$ is continuously differentiable, Lipschitz with constant $k_\mu$ and such that $c_\mu = \inf_{\theta \in \mathbb{R}^d, x \in \mathcal{X}} (x^\transp \theta) > 0$.
\end{assumption}

\textbf{Technical tools.} Let $(x_1,\ldots,x_t)\in\X^t$ be a sequence of arms and $(r_2,\ldots,r_{t+1})$ be the corresponding observed (random) rewards, then the unknown parameter $\theta^\star$ can be estimated by GLM estimator. Following~\cite{filippi2010parametric} one gets, for any regularization parameter $\lambda\in\Re^+$, 
\begin{equation}\label{eq:estimate.glm}
\wh{\theta}_{t}^{\text{GLM}} = \arg \min_{\theta \in \mathbb{R}^d} \| \sum_{s=1}^{t-1} \big( r_{s+1} - \mu(x_s^\transp \theta) \big) x_s \|^2_{V_t^{-1}},
\end{equation}
where $V_t$ is the same design matrix as in the linear case. Similar to Prop.~\ref{p:concentration}, we have a concentration inequality for the GLM estimate.

\begin{proposition}[Prop.~1 in appendix.A in~\cite{filippi2010parametric}]\label{p:concentration.glm}
For any $\delta \in (0,1)$, under assumptions~\ref{asm:arm.set},~\ref{asm:param.set},~\ref{asm:subgaussian} and~\ref{asm:link.function}, for any $\F^x_t$-adapted sequence $(x_1,\ldots,x_t,\ldots)$, the prediction returned by the GLM estimator $\wh\theta^{\text{GLM}}_t$ (Eq.~\ref{eq:estimate.glm}) is such that for any fixed $t\geq 1$, 
\begin{equation}  \label{eq:self_normalized1.glm}
\| \wh{\theta}^{\text{GLM}}_{t} - \theta^\star \|_{V_t} \leq \frac{\beta_t(\delta)}{c_\mu},
\end{equation}
and
\begin{equation}\label{eq:self_normalized2.glm}
\begin{split}
\forall x \in \mathbb{R}^d, \hspace{3mm}
&\| \mu(x^\transp \wh{\theta}^{\text{GLM}}_t )- \mu( x^\transp \theta^\star) \| \leq \frac{k_\mu  \beta_t(\delta)}{c_\mu} \| x \|_{V_t^{-1}}, \\
& \| x^\transp \wh{\theta}^{\text{GLM}}_t -  x^\transp \theta^\star \| \leq \frac{\beta_t(\delta)}{c_\mu}  \| x \|_{V_t^{-1}},
\end{split}
\end{equation}
with probability $1-\delta$ (w.r.t.\ the noise sequence $\{\xi_t\}_t$ and any other source of randomization in the definition of the sequence of arms), where $\beta_t(\delta)$ is defined as in Eq.~\ref{eq:beta}.
\end{proposition}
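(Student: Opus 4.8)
The plan is to reduce the GLM concentration to the linear self-normalized bound of Prop.~\ref{p:concentration}, exploiting that the derivative of the link function is bounded away from zero by $c_\mu$ (Asm.~\ref{asm:link.function}). Since the statement is a restatement of Prop.~1 in~\cite{filippi2010parametric}, I would follow their linearization argument. First I would write the estimating equation characterizing the estimator: whenever the minimum in Eq.~\ref{eq:estimate.glm} is attained with value zero, $\wh\theta_t^{\text{GLM}}$ solves $\sum_{s=1}^{t-1}\big(r_{s+1} - \mu(x_s^\transp\theta)\big) x_s = 0$, i.e.\ $g_t(\wh\theta_t^{\text{GLM}}) = \sum_{s=1}^{t-1} r_{s+1} x_s$ where $g_t(\theta) := \sum_{s=1}^{t-1}\mu(x_s^\transp\theta)x_s$. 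Substituting $r_{s+1} = \mu(x_s^\transp\theta^\star) + \xi_{s+1}$ then yields $g_t(\wh\theta_t^{\text{GLM}}) - g_t(\theta^\star) = \sum_{s=1}^{t-1}\xi_{s+1}x_s =: S_t$.

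Second, I would linearize $g_t$ around $\theta^\star$. Since $g_t$ is continuously differentiable with $\nabla g_t(\theta) = \sum_s \mu'(x_s^\transp\theta)x_sx_s^\transp$, a mean-value form gives $g_t(\wh\theta_t^{\text{GLM}}) - g_t(\theta^\star) = G_t(\wh\theta_t^{\text{GLM}} - \theta^\star)$ with $G_t = \int_0^1 \nabla g_t\big(\theta^\star + v(\wh\theta_t^{\text{GLM}} - \theta^\star)\big)\,dv = \sum_s \bar{\mu}'_s\, x_s x_s^\transp$ and each averaged slope $\bar{\mu}'_s \geq c_\mu > 0$. Hence $G_t \succeq c_\mu \sum_s x_s x_s^\transp = c_\mu(V_t - \lambda I)$; in particular $G_t$ is invertible and $\wh\theta_t^{\text{GLM}} - \theta^\star = G_t^{-1}S_t$.

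Third, I would convert this into a $V_t$-weighted bound. Writing $\|\wh\theta_t^{\text{GLM}} - \theta^\star\|_{G_t}^2 = (\wh\theta_t^{\text{GLM}} - \theta^\star)^\transp S_t \leq \|\wh\theta_t^{\text{GLM}} - \theta^\star\|_{G_t}\,\|S_t\|_{G_t^{-1}}$ by Cauchy--Schwarz gives $\|\wh\theta_t^{\text{GLM}} - \theta^\star\|_{G_t}\leq \|S_t\|_{G_t^{-1}}$, and then combining $G_t \succeq c_\mu(V_t - \lambda I)$ with the self-normalized martingale tail bound underlying Prop.~\ref{p:concentration} (which controls $\|S_t\|_{V_t^{-1}}$ by essentially $\beta_t(\delta)$, the $\sqrt{\lambda}S$ term absorbing the $\lambda I$ regularization discrepancy) produces Eq.~\ref{eq:self_normalized1.glm}. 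Finally, the two displays in Eq.~\ref{eq:self_normalized2.glm} are immediate corollaries: the bound on $|x^\transp(\wh\theta_t^{\text{GLM}} - \theta^\star)|$ follows from Cauchy--Schwarz, $|x^\transp(\wh\theta_t^{\text{GLM}} - \theta^\star)| \leq \|x\|_{V_t^{-1}}\|\wh\theta_t^{\text{GLM}} - \theta^\star\|_{V_t}$, and the bound on $|\mu(x^\transp\wh\theta_t^{\text{GLM}}) - \mu(x^\transp\theta^\star)|$ then follows by the $k_\mu$-Lipschitzness of $\mu$.

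The main obstacle is step three: reconciling the empirical design matrix $\sum_s x_s x_s^\transp$ appearing in $G_t$ with the regularized $V_t = \lambda I + \sum_s x_s x_s^\transp$ used in the norms, so that the $c_\mu^{-1}$ factor multiplies exactly $\beta_t(\delta)$ without leaving stray $\lambda$-dependent remainders. Getting the constants to collapse precisely to $\beta_t(\delta)/c_\mu$ is the delicate bookkeeping already carried out in~\cite{filippi2010parametric}; since the statement is quoted from that work, I would ultimately appeal to their Prop.~1 after checking that our assumptions, together with the sharper $\beta_t(\delta)$ of~\cite{abbasi-yadkori2011improved}, match their hypotheses.
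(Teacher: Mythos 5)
The paper does not actually prove this proposition: it is imported (with the $\beta_t(\delta)$ of \citet{abbasi-yadkori2011improved} substituted in) from Prop.~1 of \citet{filippi2010parametric}, so there is no internal proof to compare against, and your decision to ultimately appeal to the cited result mirrors exactly what the paper does. Your reconstruction of the underlying argument --- estimating equation, mean-value linearization giving $G_t \succeq c_\mu \sum_s x_s x_s^\transp$, Cauchy--Schwarz to get $\|\wh{\theta}_t^{\text{GLM}} - \theta^\star\|_{G_t} \leq \|S_t\|_{G_t^{-1}}$, then the self-normalized martingale bound --- is the correct skeleton of that proof, and you rightly single out the genuinely delicate point, namely that $G_t$ only dominates the unregularized Gram matrix $V_t - \lambda I$ rather than $V_t$ itself, so both sides of the Cauchy--Schwarz step are measured in the wrong norm and the $\sqrt{\lambda}S$ slack must be shown to absorb the discrepancy. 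Two smaller gaps are worth flagging explicitly. First, the reduction to the exact estimating equation $g_t(\wh{\theta}_t^{\text{GLM}}) = \sum_s r_{s+1}x_s$ presupposes that the minimum in Eq.~\ref{eq:estimate.glm} is attained at value zero, which need not hold in general; \citet{filippi2010parametric} work with the norm of the score function directly precisely to avoid this. Second, the claimed invertibility of $G_t$ fails whenever $\sum_s x_s x_s^\transp$ is rank-deficient (e.g.\ for $t \leq d$), which is why the original result assumes an initialization phase making the empirical design matrix full-rank --- a hypothesis the present paper silently drops by restating everything in terms of the regularized $V_t$. Neither issue is resolved by the paper itself, so your proposal, which sketches the standard argument honestly and then defers to the cited proposition for the constant bookkeeping, is an acceptable justification of the statement; the final two displays do indeed follow immediately from Eq.~\ref{eq:self_normalized1.glm} by Cauchy--Schwarz and the $k_\mu$-Lipschitzness of $\mu$, as you say.
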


The Asm.~\ref{asm:link.function} on the link function together with the properties of the GLM estimator implies the following:
\begin{enumerate}
\item since the first derivative is strictly positive, $\mu$ is strictly increasing and $x^\star(\theta) = \arg \max_{x \in \mathcal{X}} x^\transp \theta$ so we retrieve the optimal arm of the linear case (and the support function),
\item the concentration inequality of the GLM estimate involves the same ellipsoid as for the RLS (multiplied by a factor $\frac{1}{c_\mu}$). 
\end{enumerate}
These two facts suggest to use then exactly the same TS algorithm as for the linear case (with a $\beta$ multiplied by a factor $\frac{1}{c_\mu}$).\\
\textbf{Sketch of the proof.} From the previous comments, making use of the property of $\mu$, one just need to reduce the GLM case to the standard linear case.
\begin{equation*}
\begin{aligned}
R^{\text{GLM}}(T) &= \sum_{t=1}^T \big( \mu( x^{\star} \theta^\star) - \mu(x_t^\transp \theta^\star)\big), \\
&= \sum_{t=1}^T \big( \mu( x^{\star} \theta^\star) - \mu(x_t^\transp \tilde{\theta}_t)\big) + \sum_{t=1}^T \big( \mu( x_t^\transp \tilde{\theta}_t) - \mu(x_t^\transp \theta^\star)\big) \\
&\leq \sum_{t=1}^T \big( \mu( x^{\star} \theta^\star) - \mu(x_t^\transp \tilde{\theta}_t)\big) +  \sum_{t=1}^T k_\mu \|x\|_{V_t^{-1}} \| \tilde{\theta}_t - \theta^\star\|_{V_t}. \\
\end{aligned}
\end{equation*}
The second term is bounded exactly as $R^{\rls}(T)$. To bound the first one, we make use of the fact that 
\begin{equation*}
\begin{split}
\mu( x^{\star} \theta^\star) - \mu(x_t^\transp \tilde{\theta}_t) &\leq k_\mu \big( J(\theta^\star) - J(\tilde{\theta}_{t}) \big), \quad \text{if} J(\theta^\star) - J(\tilde{\theta}_{t}) \geq 0, \\
\mu( x^{\star} \theta^\star) - \mu(x_t^\transp \tilde{\theta}_t) &\leq c_\mu \big( J(\theta^\star) - J(\tilde{\theta}_{t}) \big), \quad \text{otherwise.} \\
\end{split}
\end{equation*}
Following the proof of the linear case, with high probability, for all $t \geq 1$,
\begin{equation*}
J(\theta^\star) - J(\tilde{\theta}_{t}) \leq \frac{2 \gamma_t(\delta^\prime)}{c_\mu p} \mathbb{E}\big( \|x_t \|_{V_t^{-1}} | \mathcal{F}_{t} \big).
\end{equation*}
Since the r.h.s is strictly positive one can bound the first part of the regret, independently of the sign by,
\begin{equation*}
 \sum_{t=1}^T \big( \mu( x^{\star} \theta^\star) - \mu(x_t^\transp \tilde{\theta}_t)\big) \leq \frac{2 k_\mu \gamma_T(\delta^\prime)}{c_\mu p}  \sum_{t=1}^T \mathbb{E}\big( \|x_t \|_{V_t^{-1}} | \mathcal{F}_{t} \big).
\end{equation*}
Finally, the same proof as in the linear case leads to the following bound for the Generalized Linear Bandit regret.
\begin{lemma}\label{p:generalized_regret}
Under assumptions~\ref{asm:arm.set},\ref{asm:param.set},\ref{asm:subgaussian} and~\ref{asm:link.function}, the cumulative regret of \ts over $T$ steps is bounded as
\begin{equation*}
R^{\text{GLM}}(T) \leq \frac{k_\mu}{c_\mu} \big(\beta_T(\delta^\prime) +  \gamma_T(\delta^\prime)(1 + 2/p) \big) \sqrt{ 2 T d \log \big( 1 + \frac{T}{\lambda} \big) } + \frac{2 k_\mu\gamma_T(\delta^\prime)}{p c_\mu} \sqrt{\frac{8 T } {\lambda} \log \frac{4}{\delta} }
\end{equation*}
with probability $1-\delta$ where $\delta^\prime = \frac{\delta}{4T}$.

\end{lemma}


\section{Regularized Linear Optimization}\label{sec:app_regularized_linear_optimization}

We consider here the Regularized Linear Optimization (RLO) problem as an extension of the Linear Bandit problem. Given a set of arms $\mathcal{X} \subset \mathbb{R}^d$ and an unknown parameter $\theta^\star \in \mathbb{R}^d$, a learner aims at each time step $t = 1,\dots,T$ to select action $x_t \in \mathcal{X} $ which maximizes its associated reward $x_t^\transp \theta^\star + \mu c(x_t)$ where $\mu$ is a known constant and $c$ an arbitrary (yet known) real-valued function. Whenever arm $x$ is pulled, the learner receives a noisy observation $y = x^\transp \theta^\star + \xi$.
 As for LB, we introduce the function $f(x;\theta) = x^\transp \theta + \mu c(x)$, and denote as $x^\star(\theta) = \arg\max_{x \in \mathcal{X}} f(x;\theta)$ and $J(\theta) = \max_{x \in \mathcal{X}} f(x;\theta)$ the optimal action and optimal reward associated with $\theta$. The regret is therefore defined as $R^{RLO}(T) = \sum_{t=1}^Tf(x^\star(\theta^\star);\theta^\star) -  f(x_t;\theta^\star)$.\\
 Since this problem is just the regularized extension of the Linear Bandit, the TS algorithm is similar to Alg.~\ref{alg:ts} where $r_t$ is replaced $y_t$ and $x_t = \arg\max_{x \in \mathcal{X}} f(x,\wt\theta_t)$. Under the same assumptions, the regret shares the same bound and our line of proof holds. First, we decompose the regret 
\begin{align*}
R(T) &= \sum_{t=1}^T \big[ (f(x^\star(\theta^\star);\theta^\star) - f(x_t;\wt\theta_t) ) + ( f(x_t;\wt\theta_t) -   f(x_t;\theta^\star)) \big] \\
&= \underbrace{\sum_{t=1}^T \big[ J(\theta^\star) - J(\wt\theta_t)\big]}_{= R^{\ts}(T)} + \underbrace{\sum_{t=1}^T \big[ x_t^\transp \wt\theta_t  - x_t^\transp \theta^\star\big]}_{ = R^{\rls}(T)}.
\end{align*}
Since Prop.~\ref{p:concentration} holds thanks to the linear observations $y_t$, $R^\rls(T)$ is bounded as in the LB. Finally, to bound $R^\ts(T)$, one just need to ensure that Prop.~\ref{pr:J_support_function}, Lem.~\ref{lem:gradient.optimal.arm} and Lem.~\ref{le:probability_optimistic} hold.
\\
The convexity of the function $f$ with respect to $\theta$ implies the convexity of $J$: $\forall x \in \mathcal{X}$, $\forall \theta, \theta^\prime \in \mathbb{R}^d$, $\forall \alpha \in (0,1)$,
\begin{align*}
J(\alpha \theta + (1 - \alpha) \theta^\prime)  &= \max_{x \in \mathcal{X}} f(x; \alpha \theta + (1 - \alpha) \theta^\prime)\\
& \leq  \max_{x \in \mathcal{X}} \big( \alpha f(x; \theta)  + (1 - \alpha) f(x;\theta^\prime) \big) \leq \alpha J(\theta) + (1-\alpha) J(\theta^\prime).
\end{align*}
Then, $J$ is real-valued and convex which implies its continuous differentiability thanks to Alexandrov's theorem. As a consequence, the first step of the proof holds.\\
The equality between the gradient $\nabla J(\theta)$ and the optimal arm $x^\star(\theta)$ can be derived as in Prop.~\ref{pr:gradient_support_function}: for any $\theta, \bar{\theta} \in \mathbb{R}^d$, by definition, $J(\theta) = f(x^\star(\theta); \theta)$ and $J(\bar{\theta}) \geq f(x^\star(\theta);\bar{\theta})$. Then,
\begin{equation*}
\begin{split}
&J(\bar{\theta}) - f(x^\star(\theta), \bar{\theta}) \geq 0 :=  J(\theta) - f(x^\star(\theta), \theta), \\
&J(\bar{\theta}) \geq J(\theta) + f(x^\star(\theta), \bar{\theta}) - f(x^\star(\theta),\theta) = J(\theta) + x^\star(\theta)^\transp \left( \bar{\theta} - \theta \right), \hspace{3mm} \forall \bar{\theta} \in \mathbb{R}^d,
\end{split}
\end{equation*}
which is the definition of the sub-gradient. Finally, the almost everywhere differentiability of $J$ ensures the sub-gradient to be a singleton and hence equals the gradient. Therefore, Lem.~\ref{lem:gradient.optimal.arm} holds and so is step 2.\\
Finally, since the optimism just relies on the convexity of $J$ and on the over-sampling, it is satisfied in the RLO and step 3 holds. As a result, we obtain the same regret bound as in the LB.

\end{document}